\providecommand{\@fourthoffour}[4]{#4}
\def\fixstatement#1{%
  \AtEndEnvironment{#1}{%
    \xdef\pat@label{\expandafter\expandafter\expandafter
      \@fourthoffour\csname#1\endcsname\space\@currentlabel}}}
\globtoksblk\prooftoks{1000}
\newcounter{proofcount}
  \edef\next{%
    \noexpand\begin{proof}[Proof of \pat@label]%
    \unexpanded\expandafter{\BODY}}%
\def\printproofs{%
  \count@=\z@
  \loop
    \the\toks\numexpr\prooftoks+\count@\relax
    \ifnum\count@<\value{proofcount}%
    \advance\count@\@ne
  \repeat}
\theoremstyle{definition}
\theoremstyle{plain}
\newtheorem{definition}{Definition}
\newtheorem{theorem}{Theorem}
\algnewcommand{\And}{\textbf{and}}
\newcommand{\bigslant}[2]{{\raisebox{.2em}{$#1$}\left/\raisebox{-.2em}{$#2$}\right.}}
\def\R{\ensuremath{\mathbb{R}}}
\def\N{\ensuremath{\mathbb{N}}}
\def\env{\ensuremath{\mathbf{E}}}
\def\Q{\ensuremath{\mathbf{Q}}}
\def\G{\ensuremath{\mathbf{G}}}
\def\C{\ensuremath{\mathbf{C}}}
\def\myplanner{\ensuremath{\texttt{QMP}}\xspace}
\newcommand\SE{ \ensuremath{\operatorname{SE}}\xspace}
\newcommand\SO{ \ensuremath{\operatorname{SO}}\xspace}
\def\Robot{\ensuremath{\mathcal{R}}\xspace}
\def\qr{q_{rand}}
\def\qn{q_{near}}
\def\qw{q_{new}}
\def\N{\ensuremath{\mathbb{N}}}
\def\M{\ensuremath{\mathcal{M}}}
\def\V{\ensuremath{\mathbf{V}}}
\def\klampt{\texttt{Klamp't}\xspace}
\def\ompl{\texttt{OMPL}\xspace}
\def\prm{\texttt{PRM}\xspace}
\def\rrtconnect{\texttt{RRTConnect}\xspace}
\def\est{\texttt{EST}\xspace}
\def\path{\ensuremath{\mathbf{p}}}
\title{\Huge \bf
%The Quotient-Space Approach for Motion Planning of Free-Floating and Fixed-Base Robots
%Quotient-Space Decomposition for Motion Planning
Quotient-Space Motion Planning
}
\author{Andreas Orthey\thanks{The  authors  are  with  the  CNRS-AIST  Joint  Robotics  Laboratory,
UMI3218/RL, Tsukuba-shi 305-8560, Japan (e-mail: \{andreas.orthey, adrien.escande, e.yoshida\}@aist.go.jp} and Adrien Escande and Eiichi Yoshida
\thanks{This research was supported by Grant-in-Aid for JSPS Research Fellow 16F116701}
}
\begin{document}

\maketitle
\thispagestyle{empty}
\pagestyle{empty}

\begin{abstract}

 %To improve the runtime of motion planning algorithms, we propose to decompose
  %the configuration space using a sequence of quotient spaces. This sequence is created by nesting a sequence of robots in each other. We rigorously define the concept of
  %nesting and show how it can be used to create a sequence of quotient spaces.  
  A motion planning algorithm computes the motion of a robot by computing a path
through its configuration space. To improve the runtime of motion planning
algorithms, we propose to nest robots in each other, creating a nested quotient-space decomposition of the configuration space.
  Based on this decomposition we define a new roadmap-based motion planning algorithm called
  the Quotient-space roadMap Planner (QMP). The algorithm starts growing a graph on the lowest dimensional quotient space, switches to the next quotient space once a valid path has been
  found, and keeps updating the graphs on each quotient space simultaneously until a valid path in the configuration space has been found.  We show that this algorithm is probabilistically complete and outperforms a set of state-of-the-art algorithms
  implemented in the open motion planning library (OMPL).

\end{abstract}

\section{Introduction}

Motion planning algorithms are fundamental for robotic applications like product
assembly, manufacturing, disaster response, elderly care or space exploration.

%Given a robot, its configuration space, an environment, a start and a goal configuration, a motion planning algorithm needs to compute a path between start and goal if one exists \cite{lavalle_2006}. 
A motion planning algorithm takes as input a robot, its configuration space, an environment, a start and a goal configuration, and computes as output a path between start and goal if one exists \cite{lavalle_2006}. This computation is NP-hard \cite{reif_1979} scaling exponentially with the number of dimensions of the configuration space. Thus, the more degrees-of-freedom (dof) a robot has, the higher the runtime of the motion planning algorithm will be. This can become the limiting factor of any robotics application. 
It is therefore important to find suitable decompositions of the configuration space, such that a planning algorithm can quickly discover the relevant parts of the configuration space. 

We developed a new decomposition of a given configuration space $\M$, which decomposes $\M$ into a sequence of nested subspaces. We observe that any configuration space $\M$ can be written as a product of subspaces
\begin{equation}
  \begin{aligned}
   \M = \M_1 \times \cdots \times \M_K
  \end{aligned}
\end{equation}

\noindent This suggests we can decompose the configuration space in the following way: Start with the product of subspaces and successively remove one subspace after another. This leads to a sequence of nested subspaces as
%We can thus decompose a configuration space by successively removing subspaces to obtain a sequence of simpler spaces, called quotient-spaces, as
\begin{equation}
  \begin{aligned}
   \M_1 \subset \M_1 \times \M_2 \subset \cdots \subset \M_1 \times \cdots \times \M_K
  \end{aligned}
\end{equation}

\noindent Each subspace in this sequence is called a quotient-space, and the sequence itself is called a quotient-space decomposition \cite{munkres_2000}.

It turns out that each quotient-space can be represented by nesting a simpler robot inside the original robot. The prototypical example is a rigid body free-floating in space. The configuration space is $\SE(3) = \R^3 \times \SO(3)$, and we can decompose it as $\R^3 \subset \R^3 \times \SO(3)$. The subspace $\R^3$ is called the quotient-space and represents a sphere nested inside the rigid body, abstracting the orientations of the rigid body.
%. The sphere should be seen as an abstraction ignoring the orientations of the rigid body. 

Such a decomposition is advantageous: Imagine the sphere being infeasible at point $p \in \R^3$. Then the rigid body is infeasible at all configurations inside the subspace $p \times \SO(3)$. We call this the necessary condition of nested robots. 

%We propose a new approach to decompose the configuration space of a robot by creating its quotient space. The quotient space of a configuration space is obtained by shrinking all cosets of a subspace to zero.

%An example is the configuration space $\SE(3)$ of a rigid body free-floating in space. A subspace of $\SE(3)$ is $\SO(3)$ and we can imagine $\SE(3)$ as being composed of equivalence classes of $\SO(3)$. The set of equivalence classes is the quotient space $\R^3$. One point in $\R^3$ corresponds to one unique equivalence class of $\SO(3)$. A rigorous treatment of computing equivalence classes and background of quotient spaces is given in Sec. \ref{sec:background}. 

We have developed a new algorithm, called the Quotient-space roadMap Planner (\myplanner), which is able to exploit such a quotient-space decomposition.
We first show how to build a quotient-space decomposition for any robot in Sec. \ref{sec:background}. We then discuss the inner workings of \myplanner in Sec. \ref{sec:qmp}, prove that \myplanner is probabilistically complete (Sec. \ref{sec:completeness}), and we develop three heuristics to improve its runtime (Sec. \ref{sec:heuristics}). Finally, we demonstrate that \myplanner (Sec. \ref{sec:experiments}) can be applied to free-floating rigid bodies, free-floating articulated bodies, fixed-base serial chains and fixed-base tree chain robots. 
\section{Related Work}

%Decomposing the configuration space is an idea
%based on work in hierarchy theory \cite{wu_2013}, arguing that complex systems can work more efficiently by a hierarchical organization
%\cite{simon_1996}.

%Hierarchical decomposition of obstacles and object for planning on different coarse-ness scales \cite{ghosh_2016}.

%Motion Planning for Humanoid Robots \cite{harada_2010a} \cite{elkhoury_2013}
%
%Contact Planning \cite{escande_2013} 

%Hierarchies in motion planning have been constructed by covering the configuration space by hand-crafted symbolic states \cite{kaelbling_perez_2011}. Two planners are used, one finds a path of symbolic states and a second computes a configuration space path along the symbolic states. This work is complementary to ours in that the configuration space is split into partitions, while we investigate different levels of abstraction. 

%COMMENT on
%completeness.
%Kaelbling and Perez \cite{kaelbling_perez_2011} \cite{levihn_2013}, introduce a
%motion decomposition into a graph, where the lowest nodes represent
%motion planning problems and higher nodes symbolic tasks.

%We discuss two categories of papers, first we give some pointers to the origin of quotient-spaces, and their applications to robotics. Second, we show related motion planning algorithms which decompose the space without quotient-spaces.

We review two categories of papers. First, we review quotient-space decompositions and their application to motion planning. Second, we review alternative decomposition methods.

Quotient-space decompositions are ubiquitous in mathematics, appearing as quotient-groups in algebra, filtrations in linear algebra, or nests in functional analysis. The construction of simplicial complexes in algebraic topology is a prominent example of a quotient-space decomposition.

The application of quotient spaces to continuous spaces and decision making has
been originally developed by \cite{zhang_2004} and \cite{yao_2013}. In robotics, quotient-spaces have been used, albeit under different names. \citeauthor{bretl_2006} proposed a two-level decomposition: first a path on a stance graph is planned and then
configurations along the path are sampled \cite{bretl_2006}. The stance graph can be seen as a quotient space of the configuration space by the stance subspaces. Grey et al. \cite{grey_2017} use a two-level quotient space decompositions, embedding the torso inside a humanoid robot. A similar idea can be found in Tonneau et al.\cite{tonneau_2018}, approximating a robot by a simpler model. Both methods are similar to ours, but use only a two-level decomposition, and do not define the nesting procedure for general robots.

The closest approach to ours is the multi-level decomposition scheme by \cite{zhang_2009} whereby a sequence of nested robots is created. Their planning algorithm starts with the lowest dof robot, computes a path, and uses this path as a constraint for the next bigger robot. This approach can neither deal with non-simple paths (see Fig.
\ref{fig:nonsimple}), nor with spurious paths (see Fig. \ref{fig:spurious}, Sec. \ref{sec:heuristics}). It is thus not complete.

Closely related is the exploration/exploitation tree (EET) algorithm \cite{rickert_2014}. The authors compute a sequence of spheres in the workspace, called a tunnel, to approximate the space of collision-free paths. However, this method is not complete, and cannot handle spurious shortest tunnels/paths (see Sec. \ref{sec:heuristics}).

The motion planning problem can also be decomposed without using quotient-spaces.

Kunz et al. \cite{kunz_2016} use a hierarchical rejection sampling
approach to improve the Informed-RRT* 
algorithm \cite{gammell_2014}. While the focus is different from ours, their method is methodically similar to our algorithm, where samples are discarded if they are not necessary feasible. 

Our work is closely related to low-dimensional sampling techniques which guide configuration space sampling. The algorithm by van den Berg and Overmars \cite{berg_2005} precomputes narrow passages of the workspace and uses thoses passages to sample the space more densely in those areas. A similar idea can be found in \cite{zucker_2008}, where the authors discretize the workspace, compute a shortest path in the workspace between start and goal configuration, and then sample from a cell in workspace in proportion to the cell's distance to the shortest path. Closely related is also the dimensionality reduction method by \cite{orthey_2015}, which considers ignoring paths not having a minimal swept volume. 

The KPIECE algorithm \cite{sucan_2009} is another example of a hierarchical decomposition. The environment is divided into smaller and smaller boxes until a certain threshold is reached. A small box corresponds to workspace points near to a boundary. Those areas are sampled more frequently to effectively guide samples towards the configuration space boundary. This algorithm is orthogonal to ours: they decompose the environment, we decompose the robot.

\section{Quotient-Space Decomposition\label{sec:background}}

First, we describe the idea of a quotient space and show two applications to the vector space $\R^2$ and the manifold $\SO(2)^2$. Second, we show how robots can be nested in each other and thereby create a sequence of quotient spaces. Third, we show that being feasible in a quotient space is a necessary condition for being feasible in the configuration space. 

\subsection{Quotient Space}

Let $M$ be a vector space and $C$ be a subspace of $M$. Then the quotient space of $M$ by $C$, denoted by $\bigslant{M}{C}$ is the space obtained by collapsing all equivalence classes of $C$ in $M$ to zero \cite{munkres_2000}. Collapsing a space is done by creating an equivalence relation $\sim$ on $M$: for all $x,y \in M$ we have that $x\sim y$ if $x-y \in C$. This relation creates a partition of equivalence classes on the vector space. The set of equivalence classes is called the quotient space.

\begin{figure}
  \centering
  \newcommand\plane[1]{
  \coordinate (A) at (0,0);
\coordinate (B) at (0,-2);
\coordinate (C) at (2,-1);
\coordinate (D) at (2,-3);
\coordinate (E) at (0,-1);
\coordinate (F) at (2,-2);
\draw[<-,shorten <= -10pt] (A) -- (B);
\draw (B) -- (D);
\draw (D) -- (C);
\draw (C) -- (A);
\draw[->,shorten >= -10pt] (E) -- (F);
\node[left] at (A){$y$};
\node[above right] at (F){$x$};
\coordinate (Q) at (0.8,-0.4);
\node[above right] at (Q){$q^2$};
\draw[black,fill=black,circle] (Q) circle (1pt);
\coordinate (G) at (1,-3.5);
\node at (G){#1};
}
\begin{tikzpicture}
\plane{$\R^2$}
\end{tikzpicture}
\begin{tikzpicture}
\plane{$\R^2$}
\foreach \x in {1,...,9}{
   \coordinate (\x0) at (\x*0.2,-\x*0.1);
   \coordinate (\x1) at (\x*0.2,-\x*0.1-2);
   \draw (\x0) -- (\x1);
   }
\end{tikzpicture}
\begin{tikzpicture}
\coordinate (E) at (0,-1);
\coordinate (F) at (2,-2);
\coordinate (G) at (1,-3.5);
\coordinate (Q) at (0.8,-1.4);
\coordinate (x0) at (0.8,-0.4);
\coordinate (x1) at (0.8,-2.4);
\draw[->,shorten >= -10pt] (E) -- (F);
\node[above right] at (F){$x$};
\node at (G){$\R=\bigslant{\R^2}{\R}$};
\draw[black,fill=black,circle] (Q) circle (1pt);
\draw[black,fill=black,circle] (x0) circle (1pt);
\node[above right] at (Q){$q^1$};
\node[above right] at (x0){$q^2$};
\draw[dashed] (x0) -- (x1);
\end{tikzpicture}
\vspace{-10pt}
  %\caption{Visualization of Quotient Space. Left: the plane $\R^2$ and an arbitrary point $q^2 \in \R^2$. Middle: We partition the plane into subspaces $\R$ by declaring that two points are equivalent if $x-y \in 0 \times \R$. Right: Collapsing the subspaces to zero yields $\R$, whereby the point $q^2 \in \R^2$ is collapsed into the point $q^1 \in \R$.\label{fig:cosets}}
  \caption{Visualization of Quotient Space as collapsing of equivalence classes.\label{fig:cosets}}
  \vspace{-5pt}
\end{figure}

\begin{figure}
  \centering
  \def\lwd{0.49}
  \def\lhd{0.49}
  \includegraphics[width=\lwd\linewidth,height=\lhd\linewidth]{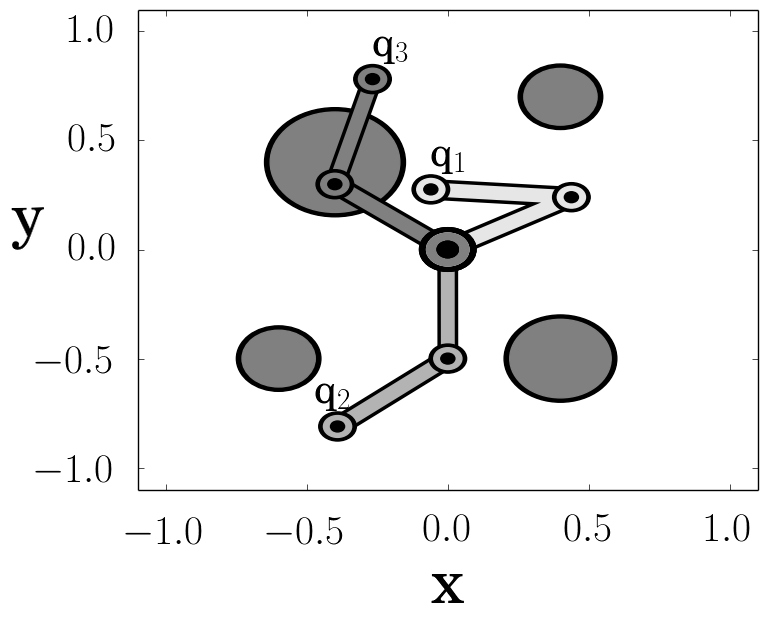}
  \includegraphics[width=\lwd\linewidth,height=\lhd\linewidth]{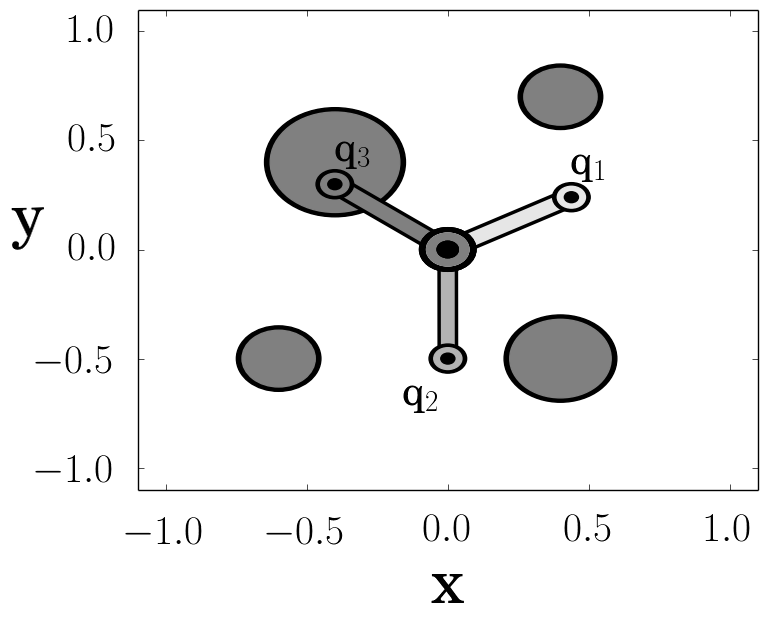}
  \includegraphics[width=\lwd\linewidth,height=\lhd\linewidth]{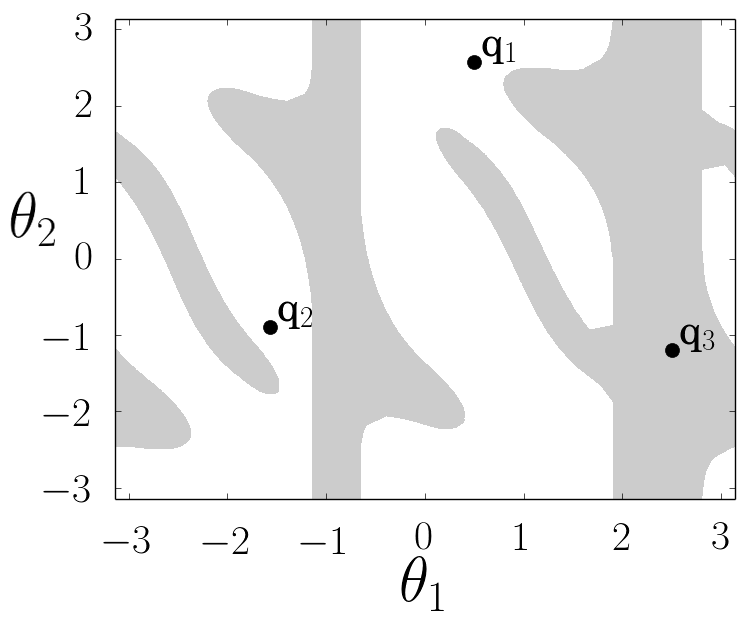}
  \includegraphics[width=\lwd\linewidth,height=\lhd\linewidth]{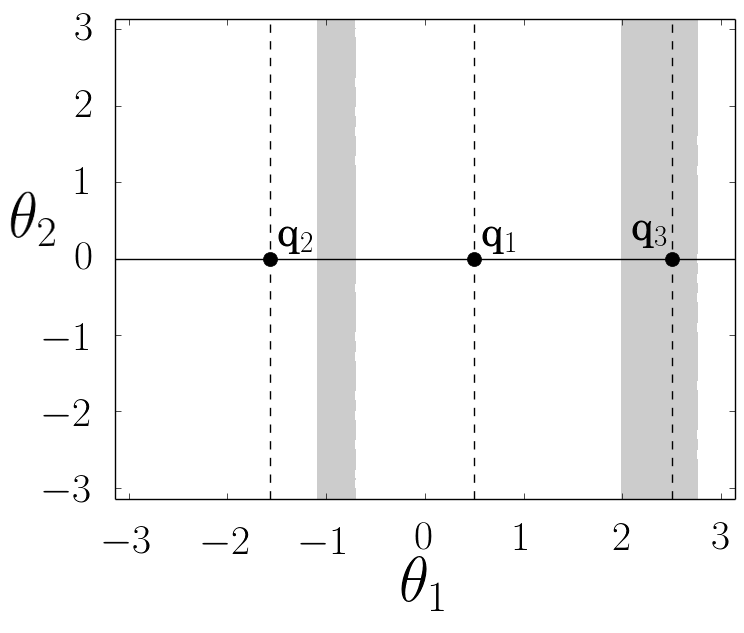}
  \vspace{-5pt}
  \begin{tikzpicture}
\coordinate (G) at (0,0);
\node at (G){$\M_1=\SO(2) \times \SO(2)$};
\coordinate (G) at (4.5,0);
\node at (G){$\M_0=\bigslant{\M_1}{\SO(2)}$};
\end{tikzpicture}
\vspace{-10pt}
  \caption{Quotient Space Decomposition for a 2-dof manipulator. Left Top: Environment with four obstacles. We have visualized three distinct configurations of the robot. Left Bottom: The configuration space with gray areas being infeasible configurations. Right Top: Environment with 1-dof nested manipulator at same configurations. Right Bottom: Configuration space is collapsed onto the quotient space. Being feasible in the quotient space is a necessary condition for feasibility in the configuration space. It can be seen that $q_3$ is infeasible, thus not fulfilling the necessary condition. \label{fig:2dof}}
  \vspace{-10pt}
\end{figure}

As an example, consider the vector space $\R^2$ and its subspace $\R$. We can partition $\R^2$ into equivalence classes of its subspace $\R$ such that two points $x,y \in \R^2$ are equivalent if $x-y \in 0 \times \R$. We visualize this in Fig. \ref{fig:cosets}, where $\R^2$ is first partitioned into the equivalence classes of $\R$ (the vertical lines), and then all lines are collapsed to yield $\R$. The point $q^2 \in \R^2$ and all points on the dashed line are equivalent and therefore collapsed into the single point $q^1 \in \R$. We identify $\R = \bigslant{\R^2}{\R}$ to denote the quotient space.

Quotient spaces generalize to manifolds \cite{lee_2003}. Consider $\M = \SO(2) \times \SO(2)$, the configuration space of a 2-dof fixed-base manipulator in the plane. Two points $x,y \in \SO(2)$ are equivalent if $x-y \in 0 \times \SO(2)$. The manifold $\M$ can be partitioned into equivalence classes of $\SO(2)$, and then each equivalent class is collapsed to yield $\SO(2) = \bigslant{\M}{\SO(2)}$. This has been visualized in Fig. \ref{fig:2dof}. On the left three different configurations are shown and their position in the configuration space. On the right the quotient space is shown, corresponding to a 1-dof manipulator nested inside the 2-dof manipulator. Two interesting observations can be made in the quotient space: First, the configuration $q_3$ is infeasible, regardless of how the second joint is moved. Second, there does not exists a path between $q_1$ and $q_2$. This can be inferred solely from the 1-dof manipulators configuration space.

%The most important point of the quotient-space is the ability

This example shows the defining feature of quotient spaces: If a robot is nested inside another robot, then the feasibility of the nested robot is a necessary condition for the feasibility of the other robot\footnote{Zhang and Zhang \cite{zhang_2004} call this the falseness-preserving property of quotient space decompositions}. We will proceed to define how robots can be nested in each other, and we will show that this definition indeed leads to the necessary condition for feasibility.

%
%
%what do i wanna say here? Introduction of nesting + necessary condition
%A quotient space corresponds to a robot with lower degree of freedom (dof). While any kind of robot could be used to create a quotient space, we impose one important restriction on them: the lower-dof robot needs to be nested inside the original robot. If two robots are nested, then the following is true: if the lower-dof robot is infeasible at a certain configuration, so will be the original one. This is called the necessary condition of nested robots\footnote{Zhang and Zhang \cite{zhang_2004} call this the falseness-preserving property of quotient space decompositions}, which is critical to our planner. Theorem \ref{thm:necessary} shows that this property is fulfilled if we nest robots as defined in Definition \ref{def:nesting}.

\subsection{Nesting of Robots\label{sec:nestedrobots}}

A robot $\Robot_i$ is nested inside another robot $\Robot_{i+1}$ if two conditions are fulfilled: First, the configuration space of $\Robot_i$ is a subspace of $\Robot_{i+1}$, and second, the volume of the body of $\Robot_{i}$ at each configuration must be a subset of the volume of the body of $\Robot_{i+1}$.

Let $\M_i$ be the configuration space of robot $\Robot_i$, and let $\V_i(p) \subset \R^3$ be
the volume of the body of robot $\Robot_i$ at configuration $p \in \M_i$. 

\begin{definition}[Nested Robot]

  Let $\Robot_i, \Robot_{i+1}$ be given. We say that $\Robot_{i}$ is nested
  in $\Robot_{i+1}$, denoted as $\Robot_i \subseteq \Robot_{i+1}$, if 
\begin{enumerate}
  \item[{(1)}] $\M_{i+1}=\M_i \times \C_{i+1}$ such that $\M_{i}=\bigslant{\M_{i+1}}{\C_{i+1}}$
  \item[{(2)}] $\V_i(p) \subseteq \V_{i+1}(p\circ q)$ for any $p
  \in \M_i$ and $q \in \C_{i+1}$\label{def:nesting}
  \end{enumerate}
\end{definition}
whereby the operation $\circ$ is the cartesian product defined as $p \circ q = (p,q) \in \M_{i+1}$.

Given a robot $\Robot$ and a sequence of nested robots $\Robot_1 \subseteq \cdots \subseteq \Robot_K = \Robot$, the configuration spaces define a decomposition of $\M$ as
\begin{equation}
  \begin{aligned}
    \M_1 \subseteq \cdots \subseteq \M_K
    \label{eq:filtration}
  \end{aligned}
\end{equation}

%Further, we have $\M = \bigsqcup\limits_{i=1}^K \C_i$. 
\noindent This decomposition will be called a quotient-space decomposition. 

\subsection{Necessary Conditions}

From Definition \ref{def:nesting} we can infer the key property of the quotient-space decomposition: if a nested robot is infeasible, then so is the original robot. 

Let $\env$ denote the environment, the subset of $\R^3$ containing obstacles. We say that robot $\Robot_i$ is feasible at configuration $q \in \M_i$ if $\V_i(q) \cap \env = \emptyset$.

\begin{theorem}[Necessary condition of nested robot]
If $\Robot_i \subseteq \Robot_{i+1}$, $p \in \M_i$, and $\V_i(p) \cap \env \neq \emptyset$ (robot $\Robot_i$ is infeasible at $p$), then $\V_{i+1}(p\circ q) \cap \env \neq \emptyset$ for any $q \in \C_{i+1}$ (robot $\Robot_{i+1}$ is infeasible  everywhere).\label{thm:necessary}
\end{theorem}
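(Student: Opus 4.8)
The plan is to derive the statement directly from condition (2) of Definition \ref{def:nesting}, which is exactly the volumetric containment $\V_i(p) \subseteq \V_{i+1}(p\circ q)$. First I would note that $p \circ q = (p,q)$ is a well-defined configuration of $\Robot_{i+1}$: since $\Robot_i \subseteq \Robot_{i+1}$, condition (1) gives $\M_{i+1} = \M_i \times \C_{i+1}$, so $p \in \M_i$ and $q \in \C_{i+1}$ indeed yield $p \circ q \in \M_{i+1}$, and hence $\V_{i+1}(p\circ q)$ is defined.

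Next I would pick a witness point. By hypothesis $\V_i(p) \cap \env \neq \emptyset$, so choose $x \in \V_i(p) \cap \env$. Applying condition (2) of Definition \ref{def:nesting} with this particular $p$ and $q$, we get $x \in \V_i(p) \subseteq \V_{i+1}(p \circ q)$, while $x \in \env$ by choice. Therefore $x \in \V_{i+1}(p\circ q) \cap \env$, so $\V_{i+1}(p\circ q) \cap \env \neq \emptyset$, i.e.\ $\Robot_{i+1}$ is infeasible at every configuration projecting to $p$. Since $q \in \C_{i+1}$ was arbitrary, this holds for all such $q$.

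I would close with a one-line remark that, by chaining this through the sequence $\Robot_1 \subseteq \cdots \subseteq \Robot_K$ (i.e.\ a trivial induction on the index), infeasibility of any nested robot $\Robot_i$ at a configuration $p$ propagates to infeasibility of $\Robot_K = \Robot$ at every configuration of the form $p \circ q_{i+1} \circ \cdots \circ q_K$, which is the form in which the result is used by \myplanner. Honestly there is no real obstacle here: the theorem is essentially a restatement of the defining containment property, and the only thing to be careful about is that the composition $\circ$ stays inside $\M_{i+1}$, which condition (1) guarantees; the content of the result lies entirely in the Definition, not in the proof.
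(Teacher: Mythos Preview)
Your proof is correct and follows exactly the paper's approach: both simply invoke condition~(2) of Definition~\ref{def:nesting} to conclude that $\V_i(p)\cap\env\neq\emptyset$ together with $\V_i(p)\subseteq\V_{i+1}(p\circ q)$ forces $\V_{i+1}(p\circ q)\cap\env\neq\emptyset$. Your version is just more explicit about the witness point and the well-definedness of $p\circ q$, and the closing remark on chaining through the sequence is a harmless addition not present in the paper.
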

\begin{proof}
Since $\V_i(p) \cap \env \neq \emptyset$ and $\V_i(p) \subseteq \V_{i+1}(p \circ q)$ for any $q \in \C_{i+1}$, it must follow that $\V_{i+1}(p \circ q) \cap \env \neq \emptyset$.
\end{proof}

\noindent Theorem \ref{thm:necessary} implies that if a configuration $p \in \M_i$ is infeasible, then so is the subspace $p \times \C_{i+1}$. We can exploit this fact to ignore subspaces of the configuration space, thereby developing a motion planning algorithm with lower runtime.

%$\Robot_{i+1}$ is feasible at configuration $p \circ q \in \M_{i+1}$ if and only if $\Robot_i$ is feasible at configuration $p \in \M_i$. 
%Every infeasible configuration of $\M_i$ can therefore be ignored to create a more efficient planning algorithm.

\section{Quotient-Space RoadMap Planner\label{sec:qmp}}

The Quotient-Space roadMap Planner (\myplanner) works as depicted in Fig. \ref{fig:qmp}. On the top right the configuration space of the 2-dof fixed-base manipulator is shown with a start configuration (green) and a goal configuration (red). The corresponding start and goal configurations on the quotient-space are shown in the top left figure. In the first step, a graph is grown on the quotient space (middle left). Once a valid path has been found between start and goal configuration, a second graph is grown on the configuration space (middle right), whereby the samples are constrained to lie above the quotient-space graph. Both graphs are simultaneously grown (bottom left),
until a valid path has been found on the configuration space (bottom right), or until a time limit has been reached. For more than two quotient-spaces, this idea is iteratively continued. 

\begin{figure}
\def\lw{0.49}
\def\lh{0.48}
\includegraphics[width=\lw\linewidth,height=\lh\linewidth]{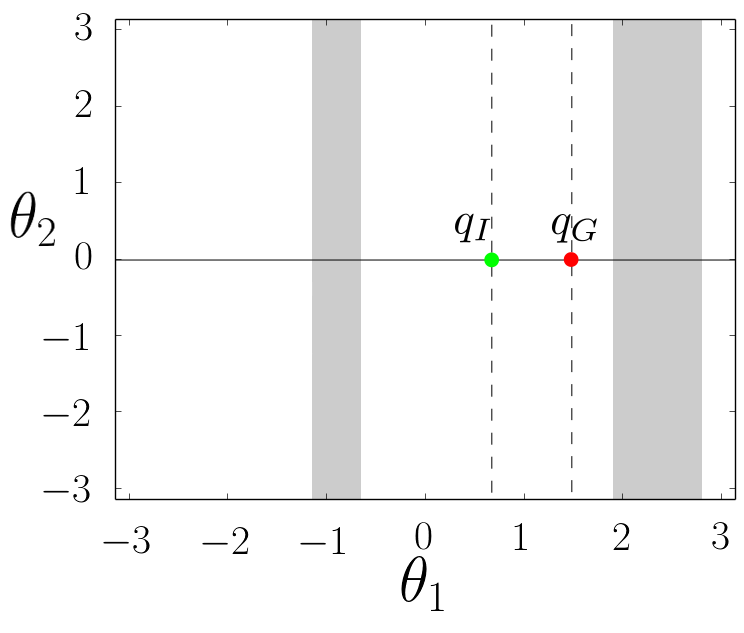}
\includegraphics[width=\lw\linewidth,height=\lh\linewidth]{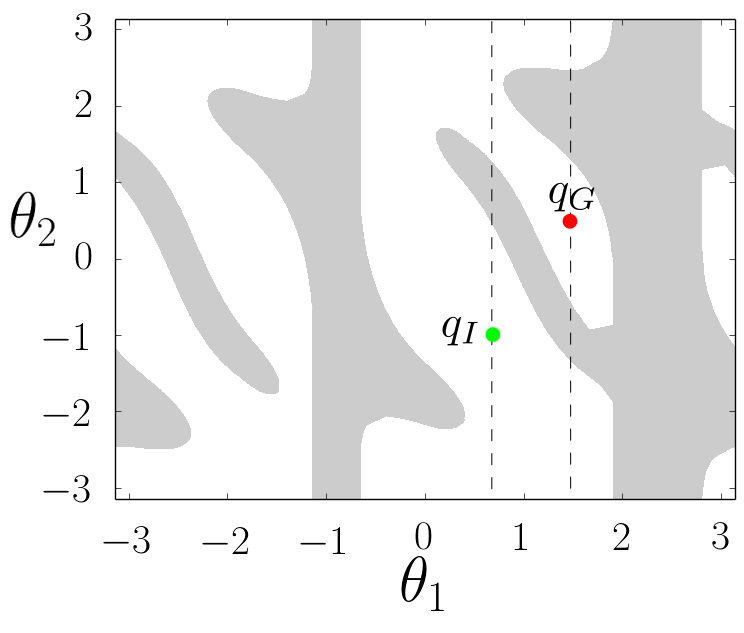}

\includegraphics[width=\lw\linewidth,height=\lh\linewidth]{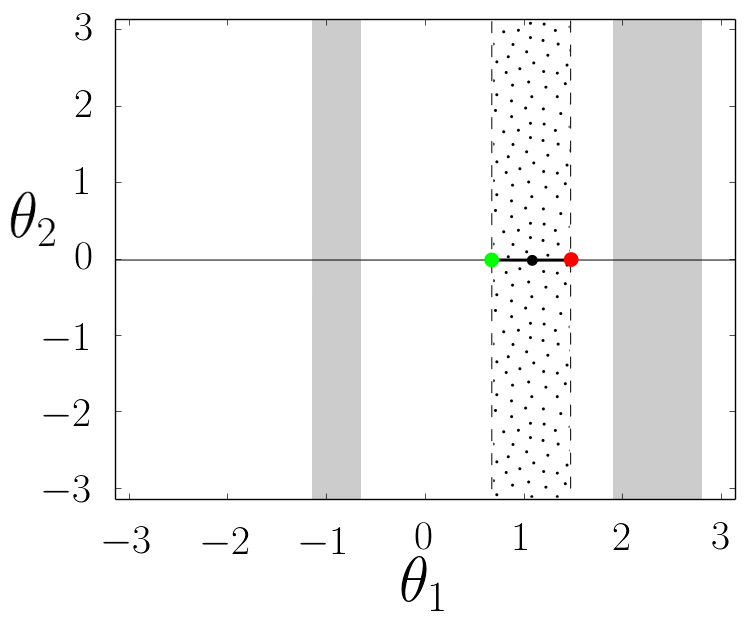}
\includegraphics[width=\lw\linewidth,height=\lh\linewidth]{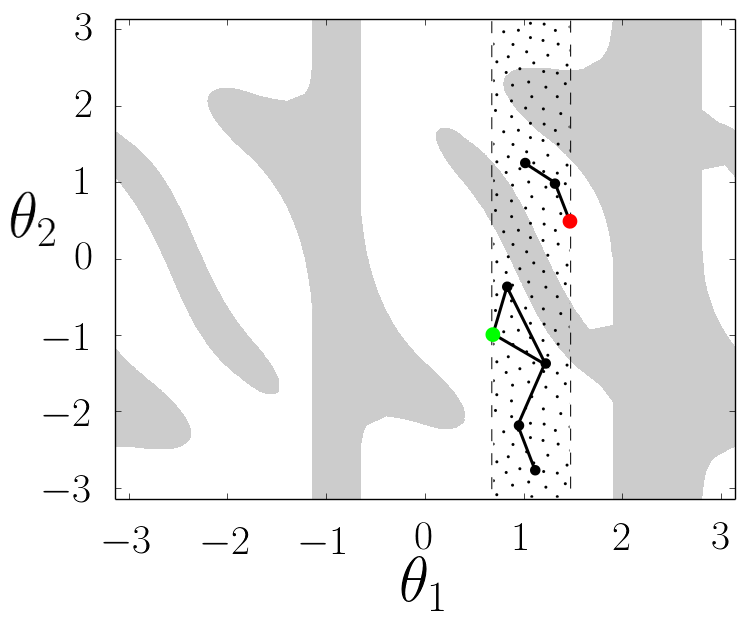}

\includegraphics[width=\lw\linewidth,height=\lh\linewidth]{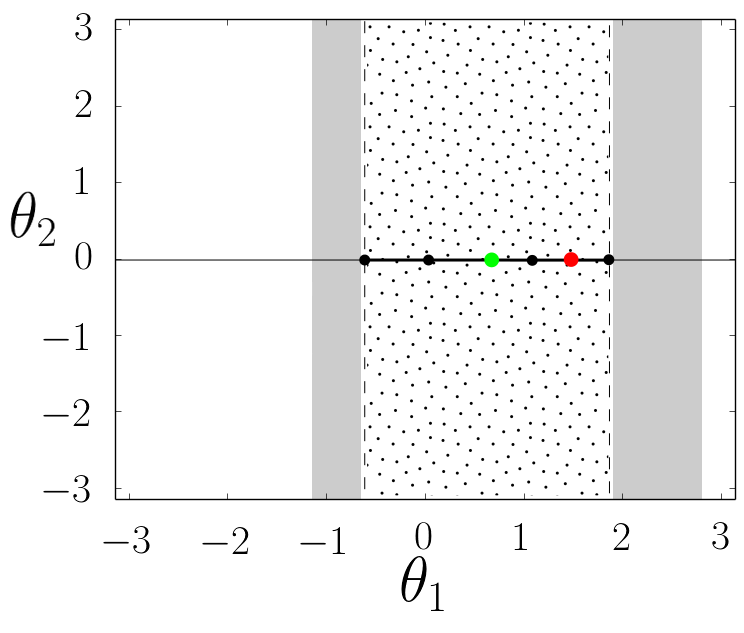}
\includegraphics[width=\lw\linewidth,height=\lh\linewidth]{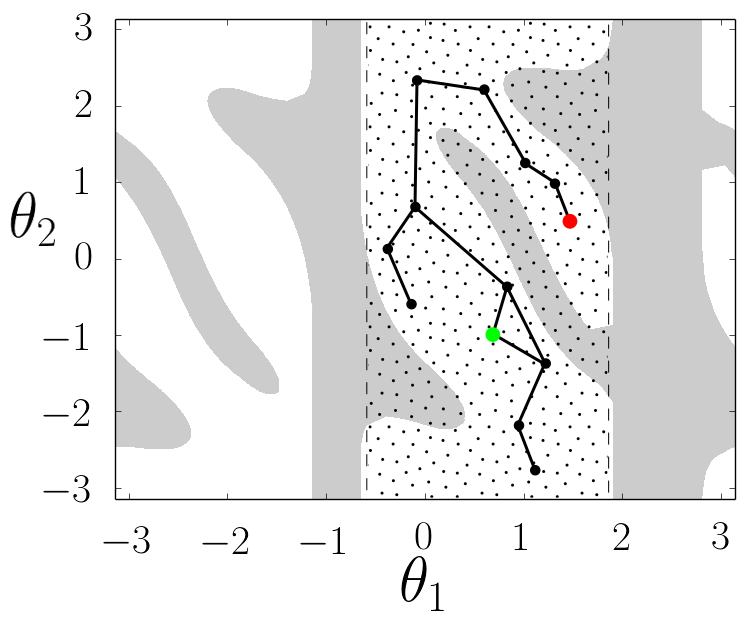}

\vspace{-10pt}
\caption{Schematic of \myplanner for a two-level decomposition. See text for clarification.\label{fig:qmp}}
\vspace{-10pt}
\end{figure}
\subsection{Quotient-Space Roadmap}

To simplify the algorithmic development, we group each quotient-space with its associated objects into a tuple called the quotient-space roadmap.

%Since each quotient space has different roadmaps and different start and goal configurations, we group all information together into a tuple called the quotient-space roadmap.

Let $\M$ be the $N$-dimensional configuration space of robot $\Robot$. We consider a nested
sequence of $K$ robots $\Robot_1 \subset \cdots \subset \Robot_K=\Robot$,
such that $\M$ is decomposed into a sequence of quotient spaces as $\M_1 \subset \cdots \subset \M_K = \M$. 
%We further
%specify the dimensionality of each standalone configuration space $\C_k$ as
%$n^k$, and we have $N = \sum\limits_{k=1}^K n^k$.

To each quotient space $\M_k$ we associate a start configuration $q^I_k$, a
goal configuration $q^G_k$, a graph $\G_k$, a shortest path $\path_k$ on $\G_k$
between $q^I_k$ and $q^G_k$, and a density $V_k$ defined on $\G_{k-1}\times \C_k$ as

\begin{equation}
  \begin{aligned}
    V_k &= \dfrac{|\G_k|}{\mu(\C_k)L_{k-1}}\label{eq:density}
  \end{aligned}
\end{equation}

whereby $|\G_k|$ are the number of vertices of $\G_k$, $L_{k-1}$ is the sum of all
edge lengths of $\G_{k-1}$, and $\mu(\C_k)$ is the $n^k$-dimensional measure of
$\C_k$. The density is used to decide which graph should be grown next.

We group all elements together into the quotient-space roadmap
\begin{equation}
  \begin{aligned}
    \Q_k &= \{ q^I_k, q^G_k, \C_k, \M_k, \G_k, \path_k, V_k, \Q_{k-1} \}\\
  \end{aligned}
\end{equation}

with $\C_k=\bigslant{\M_k}{\M_{k-1}}$, $\M_0=\emptyset$, $\Q_{0}=\emptyset$.

\subsection{Algorithmic Development}

\myplanner is an adapted version of the probabilistic roadmap planner (\prm) for quotient space decompositions. We first summarize the workings of \prm, then show how \myplanner can be built from it.

A simplified version of \prm is depicted in Algorithm \ref{alg:prm}. While a planner terminate condition (PTC) \footnote{A planner terminate condition (PTC) has to be chosen by a user and can be a time limit, an iterations limit, or a desired cost of the resulting path.} has not been reached (Line 2), the algorithm grows a graph $\G$ on the configuration space $\M$ of robot $\Robot$ (Line 3). If there exists a path on the graph between start and goal configuration (Line 4), then this path is returned (Line 5). If the PTC is reached then \prm fails and returns an empty path. 

The growing of the graph is depicted in Algorithm \ref{alg:prm_grow}. A configuration $\qr$ is sampled from the configuration space $\M$ (Line 1). If this configuration is valid (Line 2), then it is added to the graph (Line 3) and the nearest $R$ configurations $Q_R$ are searched (Line 4). The graph is extended in a straight line from each $\qn \in Q_R$ (Line 5) towards $\qr$ until it hits an obstacle (Line 6). The last configuration before the obstacle becomes $\qw$, and the edge between $\qn$ and $\qw$ is added to the graph (Line 7).

\myplanner is depicted in Algorithm \ref{alg:qmp}. 
An empty priority queue is constructed (Line 1), and the $k$-th quotient roadmap is initialized (Line 3) and added to the queue (line 4). While no path between start and goal has been found (Line 5), we pop the quotient roadmap with the smallest density from the queue (line 6), grow its graph (line 7) and push it back onto the queue (line 8). Then we check if there exists a path on the current quotient space (line 9); if yes, we construct its solution (line 10), and we continue to the next quotient roadmap. For $k=1$ the algorithm is equivalent to \prm. For $k>1$, multiple quotient spaces are inside the queue, and depending on the density function we pop one quotient space and grow its graph. The algorithm terminates if either the path $\path_K$ has been found, or if the PTC is reached, in which case $\path_K = \emptyset$ is returned. 

The growing of the quotient space graph is depicted in Algorithm \ref{alg:qmp_grow}. Instead of sampling $\M_k$ as in the \prm, we sample instead $\C_k$ uniformly, and we pick one configuration from the graph $\G_{k-1}$ on $\M_{k-1}$ (Line 1). The \texttt{SampleGraph} samples a uniform vertex from the graph $\G_{k-1}$. Then a random incoming edge is chosen, and a configuration uniformly on the edge is sampled. This is called Random-Vertex-Edge (RVE) sampling \cite{leskovec_2006}. The cartesian product $\circ$ merges the two configuration to yield a configuration on $\M_k$. The rest of the algorithm (line 2-7) operates as the \prm algorithm, with two exceptions. First, the R-Nearest-Neighbors method measures distance not by euclidean distance on $\M_k$, but by the graph distance on $\G_{k-1}$ plus euclidean distance on $\C_k$ (Line 4). Second, the \texttt{Connect} method does not interpolate along a straight line, but interpolates along the edges of the graph $\G_{k-1}$, while interpolating on $\C^k$ using a straight line. For each vertex crossed on $\G_{k-1}$ we add another configuration. The \texttt{Connect} method then returns a piece-wise linear (PL) path on $\M_{k}$. For each edge along this PL-path we add one edge to the graph $\G_k$ (Line 7).

Interpolating along the graph instead of using a straight line should be seen as a change in the metric on $\M_k$. While a standard euclidean metric is agnostic about the graph on $\M_{k-1}$, our graph interpolation metric utilizes the knowledge about $\G_{k-1}$ to improve the metric computation. 

%%%%%%%%%%%%%%%%%%%%%%%%%%%%%%%%%%%%%%%%%%%%%%%%%%%%%%%%%%%%%%%%%%%%%%%%%%%%%%%
%%%%%%%%%%%%%%%%%%%%%%%%%%%%%%%%%%%%%%%%%%%%%%%%%%%%%%%%%%%%%%%%%%%%%%%%%%%%%%%
% PRM
%%%%%%%%%%%%%%%%%%%%%%%%%%%%%%%%%%%%%%%%%%%%%%%%%%%%%%%%%%%%%%%%%%%%%%%%%%%%%%%
%%%%%%%%%%%%%%%%%%%%%%%%%%%%%%%%%%%%%%%%%%%%%%%%%%%%%%%%%%%%%%%%%%%%%%%%%%%%%%%
\lstset{language=C}
\alglanguage{pseudocode}
\begin{algorithm}
  \caption{PRM($q_I,q_G,\M$)\cite{rrt}}
\begin{algorithmic}[1]
  \State \Call{init}{$\G,q_I$}
  \While{$\neg\Call{ptc}{}$}
    \State $\Call{GrowPRM}{\G,\M}$
    \If{$\Call{IsConnected}{q_I,q_G,\G}$}
    \State \Return \Call{Path}{$q_I,q_G,\G$}
    \EndIf
  \EndWhile
 \State \Return $\emptyset$
\end{algorithmic}
\label{alg:prm}
\end{algorithm}

%%%%%%%%%%%%%%%%%%%%%%%%%%%%%%%%%%%%%%%%%%%%%%%%%%%%%%%%%%%%%%%%%%%%%%%%%%%%%%%
% Grow
%%%%%%%%%%%%%%%%%%%%%%%%%%%%%%%%%%%%%%%%%%%%%%%%%%%%%%%%%%%%%%%%%%%%%%%%%%%%%%%
\begin{algorithm}
  \caption{GrowPRM($\G,\M$)}
  \begin{algorithmic}[1]
    \State $\qr \gets \Call{Sample}{\M}$
    \If{$\neg\Call{IsValid}{\qr}$}
    	\Return 
	\EndIf        
    \State $\Call{add\_vertex}{\qr,\G}$
    \State $Q_R \gets \Call{R-NearestNeighbors}{\qr,\G}$
    \For{$\qn \in Q_R$}
      \State $\qw \gets \Call{Connect}{\qn,\qr}$
      \State $\Call{add\_edge}{\qn,\qw,\G}$
    \EndFor
  \end{algorithmic}
  \label{alg:prm_grow}
\end{algorithm}

\begin{algorithm}
  \caption{QMP($q^I_{1,\cdots,K},q^G_{1,\cdots,K},M_{1,\cdots,K}$)}
\begin{algorithmic}[1]
  \State $Q \gets \Call{priority\_queue}{}$
  \For{$k=1$ to $K$}
    \State $\Q_k=\Call{Init}{q^I_k, q^G_k, M_k,\Q_{k-1}}$
    \def\Qbest{\Q_{\text{least}}}
    \State $Q.\Call{push}{\Q_k}$
    \While{$\path^k == \emptyset$ and $\neg\Call{ptc}{}$}
      \State $\Qbest = Q.\Call{pop}{}$
      \State $\Call{GrowQMP}{\Qbest}$
      \State $Q.\Call{push}{\Qbest}$
      \If{$\Call{IsConnected}{\Q_k}$}
      	\State $\path_k = \Call{Path}{q^I_k,q^G_k,\Q_k}$
      \EndIf
    \EndWhile
  \EndFor
\State \Return $\path_K$
\end{algorithmic}
  \label{alg:qmp}
\end{algorithm}
%%%%%%%%%%%%%%%%%%%%%%%%%%%%%%%%%%%%%%%%%%%%%%%
\begin{algorithm}
\caption{Init($q^I_k,q^G_k,\M_k,\Q_{k-1}$)}\label{alg:qmpinit}
\begin{algorithmic}[1]
  \State \Return $\{q^I_k, q^G_k, \bigslant{\M_k}{\M_{k-1}}, \M_k, \emptyset, \emptyset, 0, \Q_{k-1} \}$
\end{algorithmic}
\end{algorithm}
%%%%%%%%%%%%%%%%%%%%%%%%%%%%%%%%%%%%%%%%%%%%%%%

\begin{algorithm}
  \caption{GrowQMP($\Q_k$)}\label{alg:qmpgrow}
  \begin{algorithmic}[1]
    \def\qr{q_{rand}}
    \def\qn{q_{near}}
    \def\qw{q_{new}}
    \State $\qr \gets \Call{SampleGraph}{\G_{k-1}} \circ \Call{Sample}{\C_k}$
    \If{$\neg\Call{IsValid}{\qr}$}
    	\Return 
	\EndIf        
    \State $\Call{add\_vertex}{\qr,\G}$
    \State $Q_R \gets \Call{R-NearestNeighbors}{\qr,\G_k,\G_{k-1}}$
    \For{$\qn \in Q_R$}
      \State $\{q_{n_1},\cdots,q_{n_J}\} \gets \Call{Connect}{\qn,\qr,\G_{k-1}}$
      \State $\Call{add\_edges}{q_{n_1},\cdots,q_{n_J},\G_k}$
    \EndFor
  \end{algorithmic}
  \label{alg:qmp_grow}
\end{algorithm}

\subsection{Implementation Details}

Our software uses the \klampt \cite{hauser_2016} physical simulator, and the open motion planning library \ompl \cite{sucan_2012}. 

 The nesting of robots has to be prespecified as a set of Unified Robot Description Format (URDF) files along with its subspaces. Each subspace is represented by an \ompl space, and our algorithm iterates through them computing the quotient spaces. We currently support the following quotient space computations: $\R^3 = \bigslant{\SE(3)}{\SO(3)}$, 
 $\R^2 = \bigslant{\SE(2)}{\SO(2)}$, 
 $\R^{n-m} = \bigslant{\R^n}{\R^m}$, 
 $\SE(3)= \bigslant{\SE(3)\times \R^n}{\R^n}$, and
 $\SE(3)\times \R^{n-m}= \bigslant{\SE(3)\times \R^n}{\SE(3)\times \R^m}$, with $n,m \in \N$ and $n>m>0$.

Our algorithm terminates after a path has been found on the configuration space, or a timelimit $T$ has been reached. 
%We further add three heuristics to the algorithm which have been observed to be beneficial. 

\section{Probabilistic Completeness\label{sec:completeness}}

\def\Mt{\tilde{\M}}

A motion planning algorithm is probabilistically complete if the probability that the algorithm will find a path if one exists approaches one as the number of sampled points increases. We will show that \myplanner is probabilistically complete by alluring to the probabilistic completeness of \prm \cite{svestka_1996}.

The main difference of \myplanner and the \prm on the configuration space is the choice of a sampling sequence. For \prm the sampling sequence is dense in the configuration space, while it is not true for the sampling sequence generated by $\myplanner$. However, we will show that the sampling sequence by $\myplanner$ is dense in the space of feasible configurations, thus making $\myplanner$ probabilistically complete.

Let $\M$ be the configuration space and $U \subseteq \M$ be a subset. We say that $U$ is dense in $\M$ if $cl(U)=\M$ whereby $cl$ is the closure of $U$. Let $\alpha: \N \rightarrow \M$ be a random sequence. We say that the sequence $\alpha$ is dense in $\M$ if the set $\{\alpha_n,n\in \N\}$ is dense in $\M$ \cite{lavalle_2006}. 

Each quotient space $\M_1,\cdots,\M_K$ uses a sampling sequence as
$\alpha^{(k)}: \N \rightarrow \M_k$. We will show that this sequence is dense in $\Mt_k = \{ q\in \M_k| \V(q) \cap \env = \emptyset \}$, the feasible space of $\M_k$. 
\begin{theorem}
$\alpha^K$ is dense in $\Mt_K$
\end{theorem}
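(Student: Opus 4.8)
The plan is to prove the statement by induction on the number of quotient spaces $k$, showing that $\alpha^{(k)}$ is dense in $\widetilde{\M}_k$. For the base case $k=1$, the algorithm \myplanner{} reduces to \prm{} (as noted after Algorithm \ref{alg:qmp}), which samples $\M_1$ uniformly; hence $\alpha^{(1)}$ is dense in $\M_1$, and therefore in particular dense in the subset $\widetilde{\M}_1$. (One should also invoke the standard fact, used in the probabilistic completeness of \prm{}, that a uniform i.i.d.\ sequence on a bounded measurable space is almost surely dense.)

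For the inductive step, assume $\alpha^{(k-1)}$ is dense in $\widetilde{\M}_{k-1}$, and recall from \texttt{GrowQMP} (Algorithm \ref{alg:qmp_grow}) that a sample on $\M_k$ is produced as $\Call{SampleGraph}{\G_{k-1}} \circ \Call{Sample}{\C_k}$, i.e.\ a point drawn (via RVE sampling) from the graph $\G_{k-1}$ built on $\M_{k-1}$, combined with a uniform sample on the fiber $\C_k$. The key observations are: (i) by the inductive hypothesis and the probabilistic completeness of \prm{} on $\M_{k-1}$, the vertex set of $\G_{k-1}$ becomes dense in $\widetilde{\M}_{k-1}$ as sampling continues, and RVE sampling on a graph whose vertices are dense in $\widetilde{\M}_{k-1}$ yields a sequence dense in $\widetilde{\M}_{k-1}$; (ii) the uniform sampling of $\C_k$ is dense in $\C_k$; (iii) since $\M_k = \M_{k-1} \times \C_k$ and feasibility of $\Robot_k$ at $p \circ q$ requires feasibility of $\Robot_{k-1}$ at $p$ (Theorem \ref{thm:necessary}, the necessary condition), we have $\widetilde{\M}_k \subseteq \widetilde{\M}_{k-1} \times \C_k$. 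Combining these, the product sequence $\Call{SampleGraph}{\G_{k-1}} \circ \Call{Sample}{\C_k}$ is dense in $\widetilde{\M}_{k-1} \times \C_k$, hence dense in $\widetilde{\M}_k$, which is the claim. Finally, once $\alpha^{(K)}$ is dense in $\widetilde{\M}_K$ and every feasible path lies entirely in $\widetilde{\M}_K$, the usual \prm{} connectivity argument (covering a clearance-$\delta$ feasible path by finitely many balls, each of which is hit with probability approaching one) gives probabilistic completeness of \myplanner{}.

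The main obstacle I expect is step (i): one must argue carefully that the priority-queue scheduling of \myplanner{} (Algorithm \ref{alg:qmp}, using the density function $V_k$) does not starve any quotient level — that is, each graph $\G_k$ receives infinitely many growth steps as the PTC is relaxed — so that the density of $\G_{k-1}$'s vertices in $\widetilde{\M}_{k-1}$ is actually attained rather than merely possible in the limit. One then needs that RVE sampling from a finite graph whose vertex set is an $\epsilon$-net of $\widetilde{\M}_{k-1}$ produces, in the limit, samples within $\epsilon$ of every point of $\widetilde{\M}_{k-1}$, uniformly as $\epsilon \to 0$; this is where the interplay between edge-interpolation, the graph-distance metric, and ordinary density has to be handled. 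A secondary subtlety is measure-theoretic: $\widetilde{\M}_k$ need not equal $\widetilde{\M}_{k-1} \times \C_k$ (only be contained in it), so one must check that denseness in the superset, intersected with the fact that \myplanner{} only adds \emph{valid} vertices, still yields a sequence dense in $\widetilde{\M}_k$ itself — this uses that $\widetilde{\M}_k$ is (the closure of) a full-measure open subset of the relevant slice, which should follow from mild regularity assumptions on the obstacles.
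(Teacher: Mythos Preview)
Your proposal is correct and follows essentially the same inductive route as the paper: base case by uniform density on $\M_1$, inductive step via Theorem~\ref{thm:necessary} to obtain $\tilde{\M}_k \subseteq \tilde{\M}_{k-1}\times \C_k$, and density of the product sampling therein. Regarding your ``main obstacle,'' the paper handles non-starvation of the priority queue immediately after the proof by noting that the density $V_k$ of Eq.~(\ref{eq:density}) is strictly increasing in $|\G_k|$, so every level is popped infinitely often; the finer subtleties you flag (RVE-sampling details, validity filtering) are not addressed in the paper's considerably terser argument.
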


\begin{proof}

By induction for $K=1$ $\alpha^{(1)}$ is dense in $\M_1$, and since $\Mt_1 \subseteq \M_1$, $\alpha^{(1)}$ is dense in $\Mt_1$. For $K>1$, we assume $\alpha^{(K-1)}$ is dense in $\Mt_{K-1}$. Consider the sampling domain of $\alpha^{(K)}$, defined as $S = \{\alpha_n^{(K-1)} \times \C_K| \alpha_n^{(K-1)} \in \Mt_{K-1}, n \in \N\}$. Due to the necessary condition of nested robots (Theorem \ref{thm:necessary}), we have that if $r \circ p \in \Mt_K$ then $r \in \Mt_{K-1}$. Thus $\Mt_K \subseteq S$. Since $\alpha^{(K)}$ is dense in $S$ by definition, $\alpha^{(K)}$ is dense in $\Mt_K$.
\end{proof}

Finally, we need to make sure that every $\alpha^{(k)}$ is an infinite sequence. This is only guaranteed if each quotient space from the priority queue in Algorithm \ref{alg:qmp} is chosen infinitely many times. A sufficient condition is to choose a density function strictly monotonically increasing, as we did in Eq. \ref{eq:density}.

%https://books.google.co.jp/books?id=btb1AwAAQBAJ&pg=PA181&lpg=PA181

\section{Quotient-Space Heuristics\label{sec:heuristics}}

We first discuss intricacies of quotient-spaces, and then use this knowledge to design heuristics for \myplanner.

\subsection{Quotient-Space Intricacies}

Solution paths on quotient-spaces do not behave as nicely as one would expect. We show two examples where the solution path is non-simple or spurious. 

\subsubsection{Non-simple Paths on Quotient-Space}

A non-simple path is a path with self-intersections. We say that a path $\path: [0,1] \rightarrow \M$ is simple, if it is injective,
i.e. for any $t,s \in [0,1]$: if $\path(t)=\path(s)$ then $t=s$
\cite{docarmo_1976}. A path not being simple is called non-simple.  

While non-simple paths do not occur in configuration spaces, they can occur in quotient spaces. Fig. \ref{fig:nonsimple} shows an example, where a rectangular shaped planar robot needs to move from a start configuration (green) to a goal configuration (red). The
configuration space is $\SE(2)$ and the quotient space is $\R^2 = \bigslant{\SE(2)}{\SO(2)}$, obtained by nesting a disk inside the rectangle. The
solution path is simple on $\SE(2)$, but non-simple on $\R^2$. 

\begin{figure}
  \centering
  \includegraphics[width=0.32\linewidth,height=0.28\linewidth]{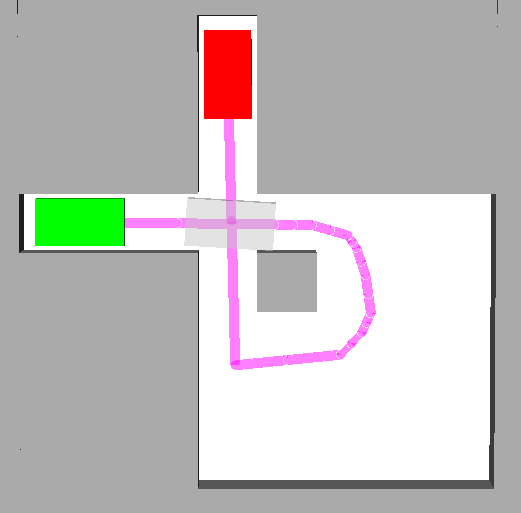}
  \includegraphics[width=0.32\linewidth,height=0.28\linewidth]{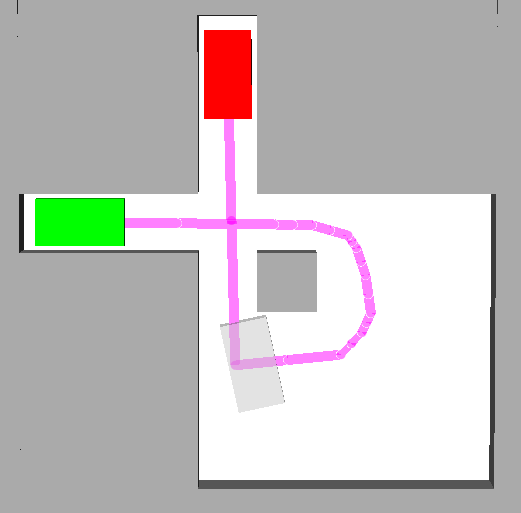}
  \includegraphics[width=0.32\linewidth,height=0.28\linewidth]{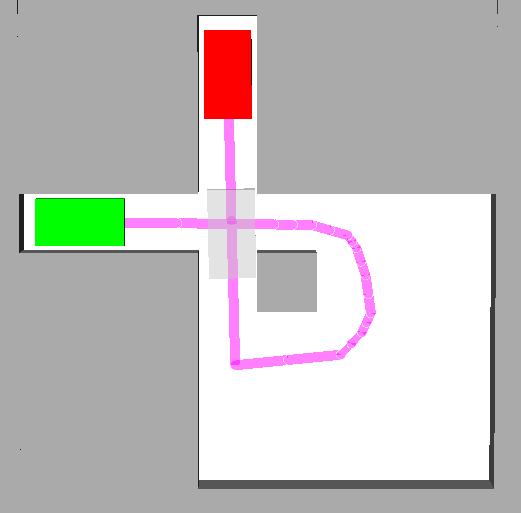}
\vspace{-8pt}
%  \caption{Non-simple path on Quotient-Space: An environment is depicted, where a rectangular-shaped robot with configuration space $\SE(2)$ needs to turn around to move through a crossing passage. The start configuration is shown in green, the goal configuration in red and the solution path in magenta. While the solution path is simple on $\SE(2)$, it is non-simple on $\R^2$. Three positions along the solution path are shown. \label{fig:nonsimple}}
  \caption{Non-simple path on Quotient-Space: See text for clarification.\label{fig:nonsimple}}
\end{figure}

\subsubsection{Spurious Shortest Path on Quotient-Space}

Spurious paths are solution paths on the quotient-space which are infeasible on the configuration space. Let $\M_1$ be a quotient space of the configuration space $\M_2$ and let $\path^1: [0,1] \rightarrow \M_1$ be the shortest feasible path on the quotient space between $q_I^1$ and $q_G^1$. We say that $\path^1$ is feasible on the configuration space, if there exists a feasible path $\path^2: [0,1] \rightarrow \path^1 \times \C_2$. If $\path^1$ is infeasible on the configuration space, we call it spurious.

Consider the example in Fig. \ref{fig:spurious}, where an L-shaped robot needs to move through an environment with two passages above and below a rectangular obstacle (Left). The quotient-space is represented by a disk nested in the L-shaped robot. The shortest path on the quotient space is spurious (Middle), since the only feasible path goes above the obstacle (Right). 

\begin{figure}
  \centering
  \includegraphics[width=0.32\linewidth,height=0.28\linewidth]{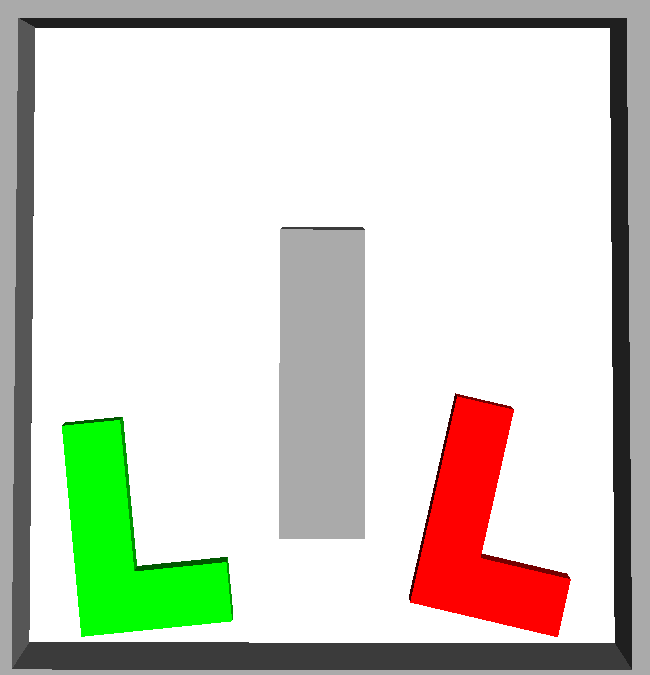}
  \includegraphics[width=0.32\linewidth,height=0.28\linewidth]{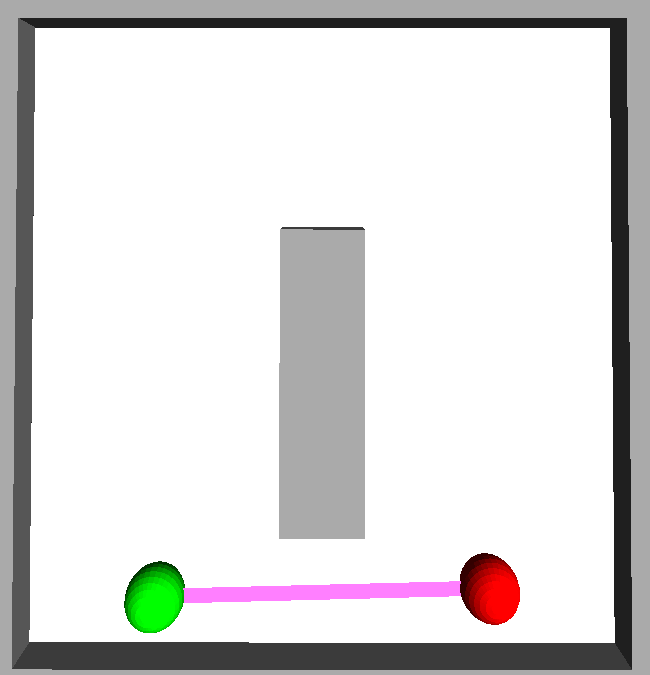}
  \includegraphics[width=0.32\linewidth,height=0.28\linewidth]{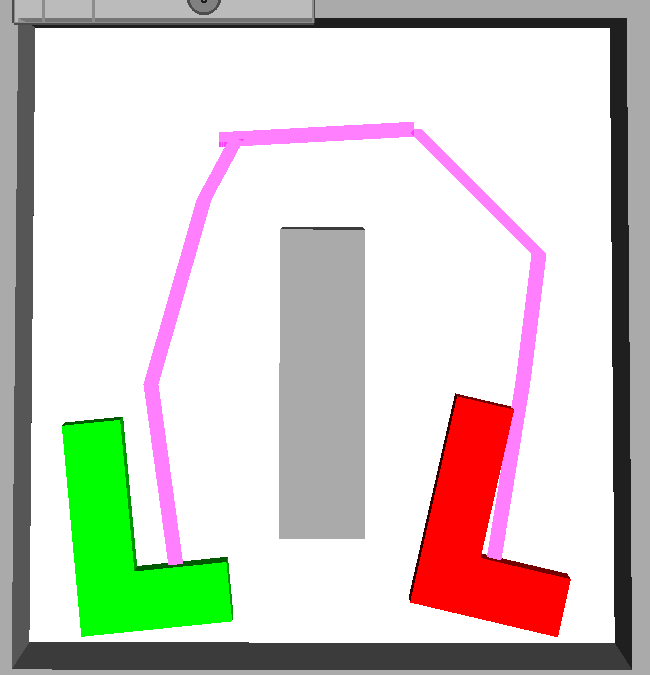}
\vspace{-8pt}
  \caption{Spurious shortest path in Quotient-Space: An environment for an L-shaped robot with configuration space $\SE(2)$ and quotient space $\R^2$. See text for clarification.\label{fig:spurious}}
\end{figure}

\subsection{Heuristics}

Keeping the intricacies in mind, we design three heuristics which we found to be beneficial to reduce the runtime of \myplanner.

\subsubsection{Diminishing-Time Shortest Path Sampling}

If the shortest path is spurious this often can be established quickly. We add a diminishing time heuristic to sample first the shortest path on the quotient-space with probability $p=0.8$ and decrease this probability over time with the amount of samples taken from the shortest path.

\subsubsection{Graph Thickening}

Often a solution path on the quotient-space is spurious, but a solution path nearby is not spurious and contains a solution on the configuration space. To alleviate this problem, we introduce an $\epsilon$-graph thickening. Given a random sample $q$ from the graph, we add an offset uniformly distributed from a ball around $q$ with radius $\epsilon$. This helps in finding nearby solutions. We have set $\epsilon = 0.01$ for free-floating robots and $\epsilon=0.1$ for fixed-base robots.

\subsubsection{Increasing Clearance}

In some cases it is advantageous to inflate the shape of the nested robot such that the clearance of the solution path on the quotient space is increased. One has to be careful with this heuristic, since it trades-off completeness with efficiency. We apply this only for the free-floating robots in our experiments by increasing the size of the inscribed sphere by $\delta=1.2$.

\section{Experiments\label{sec:experiments}}

We perform experiments on four different environments as shown in Fig. \ref{fig:experiments}: A free-floating rigid body, a free-floating articulated body, a fixed-base serial chain and a fixed-base tree chain robot. The problem is to find a path for each robot from the initial configuration (green) to the goal configuration (red). All experiments have been perfomed on a quad core 2.6 Ghz laptop with 31 GB working memory. We perform collision detection using the proximity query package (PQP) \cite{pqp}.

%The runtimes of the algorithms are reported by box plots in Fig. \ref{benchmark:dls}-\ref{benchmark:baxter}, whereby the median runtime is a red line, the dispersion of the data is a box, with the lower end being the Q1 quartile (25 percent of data below that line), and the upper end is the Q3 quartile (75 percent of data below that line). The whiskers extending from the box show the minimum data point above $Q1-1.5(Q3-Q1)$ and the maximum data point below $Q3+1.5(Q3-Q1)$, respectively. Points outside this range are plotted as black crosses and depict outliers. A horizontal dashed line shows the timelimit given to the algorithms. 

\subsection{Free-floating rigid body}

The double L-shape is a rigid body consisting of two L-shapes glued together. The environment is a wall with a small square hole in it, as depicted in Fig. \ref{fig:experiments}. This problem was introduced by \cite{berg_2005} as an example of a narrow passage planning problem. We decompose the double L-shape as $\R^3 \subset \SE(3)$, as depicted in Fig. \ref{fig:dls_decomposition}.  We compared our algorithm $\myplanner$ with three state of the art algorithms implemented in the \ompl software framework: \prm \cite{kavraki_1996}, bidirectional rapidly-exploring random tree (\rrtconnect) \cite{kuffner_2000} and the expansive space trees planner (\est) \cite{hsu_1997}. Fig. \ref{benchmark:dls} shows that only \myplanner and \prm solved all runs, and that \myplanner has a median runtime of $2.5$s, compared to $27$s for \prm.

\begin{figure}
\includegraphics[width=0.32\linewidth]{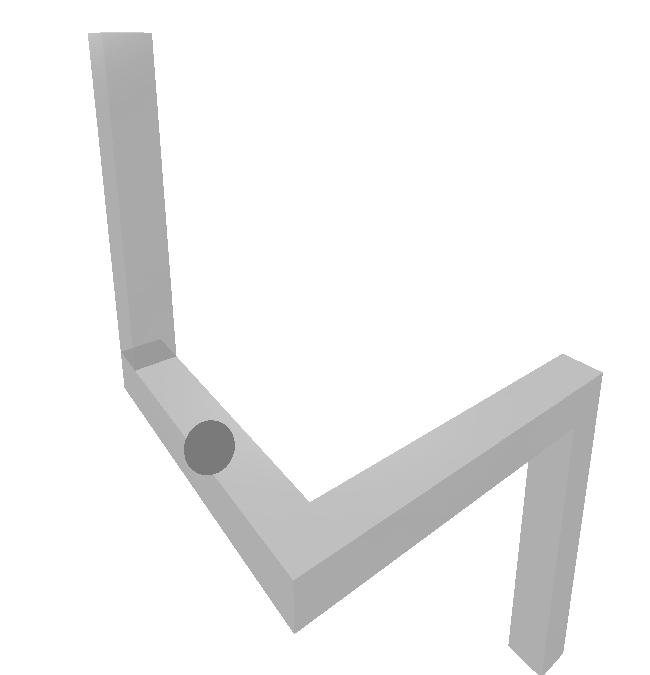}
\includegraphics[width=0.32\linewidth]{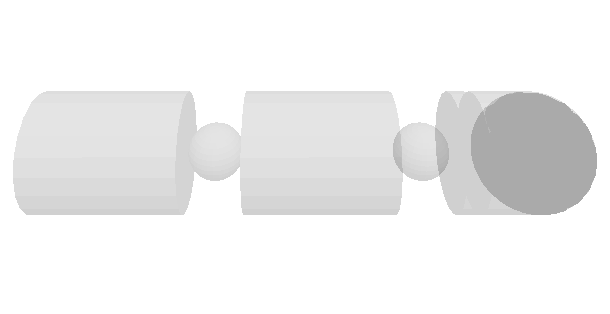}
\includegraphics[width=0.32\linewidth]{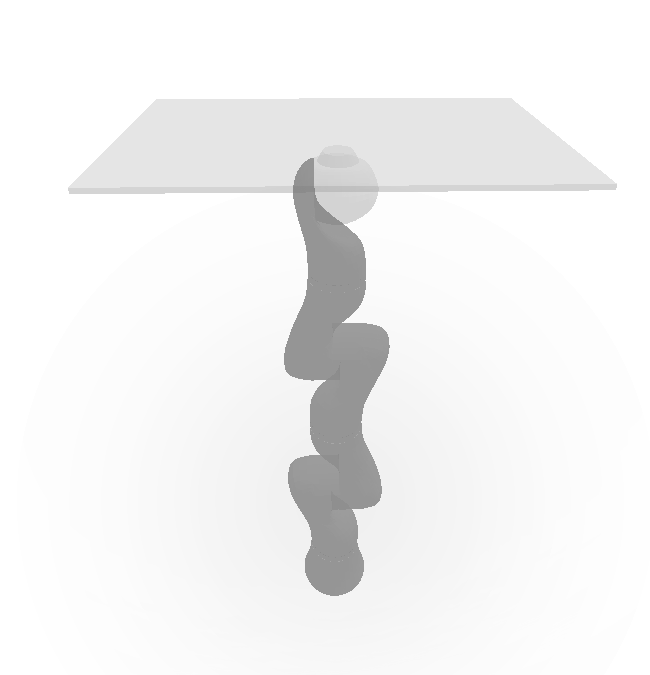}
\caption{Nesting of robots for the Double L-shape, Snake and KUKA LWR with windshield. The darker shade is the nested robot.\label{fig:dls_decomposition}}
\end{figure}

\subsection{Free-floating articulated body}

The mechanical snake is a 10-dof articulated body which has three links, connected each by two revolute joints with limits (See Fig. \ref{fig:dls_decomposition}). The snake can freely translate and rotate in space. We found the most effective decomposition to be $\R^3 \subset \SE(3)\times \R^4$. \myplanner has a median runtime of $30$s ($67$s for \est). However, we can see that \myplanner has two outliers. 

\subsection{Fixed-base serial chain: KUKA LWR}

The KUKA Ligthweight Robot (LWR) is a 7-dof manipulator. We consider transporting a windshield through a factory simulating a car manufacturing task. Our decomposition is $\R^5 \subset \R^7$. In the experiments, only \myplanner was able to solve all runs, with a median time of $18$s ($164$s for \est). 

\subsection{Fixed-base tree chain: Baxter}

Baxter is a two-arm fixed-base robot with a tree kinematic chain having two serial chains of 7 dof for each arm, and having a total of 14 dof. We consider a maintenance task, where Baxter needs to move its arms into small openings of a defect water tank. We decompose Baxter as 
$\R^5 \subset \R^7 \subset \R^{12} \subset \R^{14}$ as depicted in Fig. \ref{fig:baxter_decomposition}. Only \myplanner was able to solve all runs, with a median runtime of $4.5$s compared to $94$s for \rrtconnect.

\begin{figure}
\includegraphics[width=0.24\linewidth]{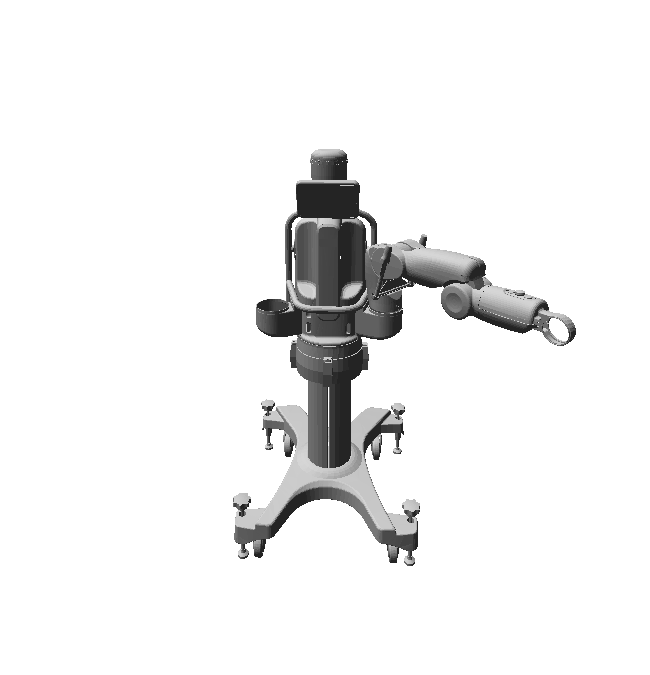}
\includegraphics[width=0.24\linewidth]{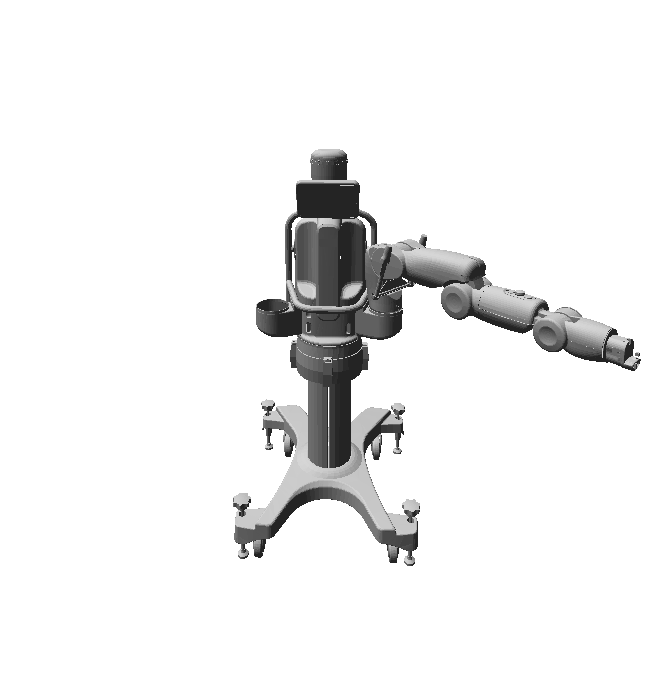}
\includegraphics[width=0.24\linewidth]{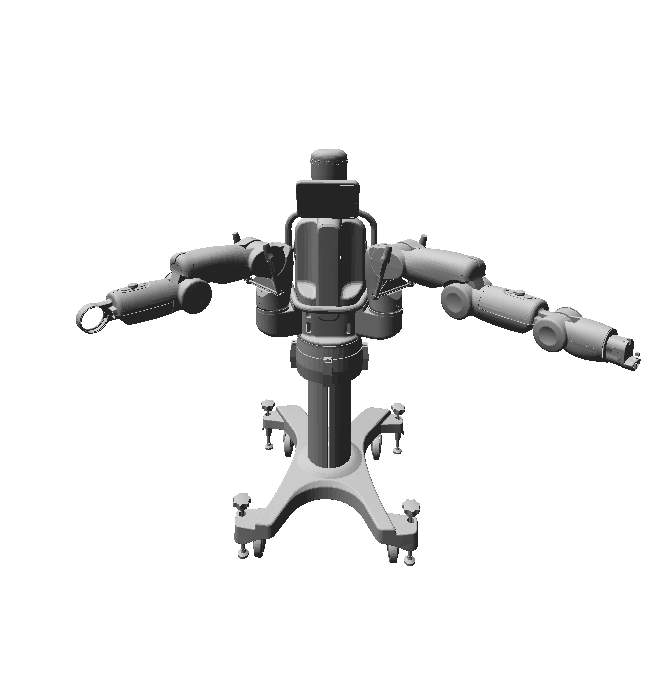}
\includegraphics[width=0.24\linewidth]{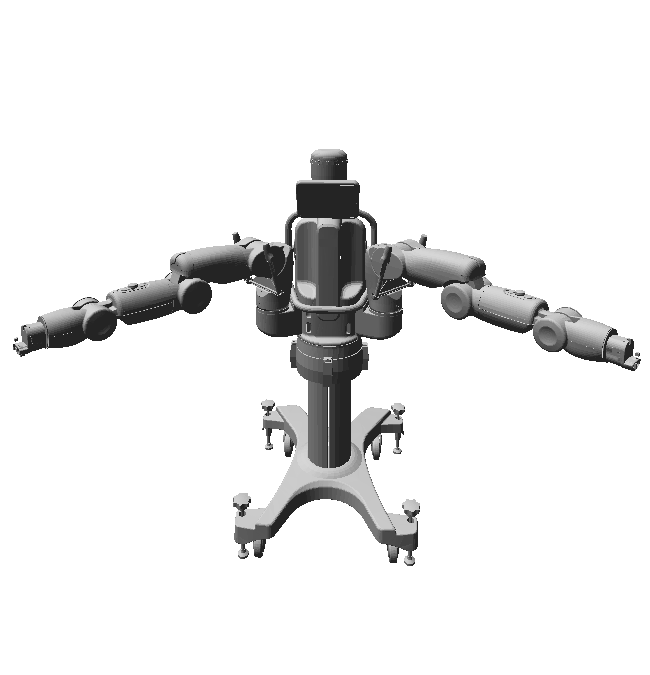}
\caption{Nesting of robots for Baxter Robot by Rethinking Robotics.\label{fig:baxter_decomposition}}
\end{figure}

\begin{figure}
  \centering
  \def\lw{0.48\linewidth}
  \def\lh{0.48\linewidth}
  \includegraphics[width=\lw,height=\lh]{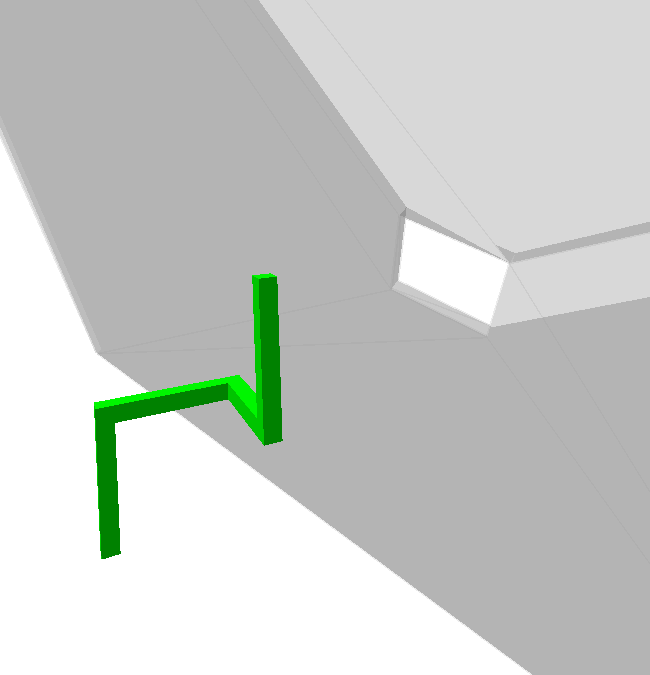}
  \includegraphics[width=\lw,height=\lh]{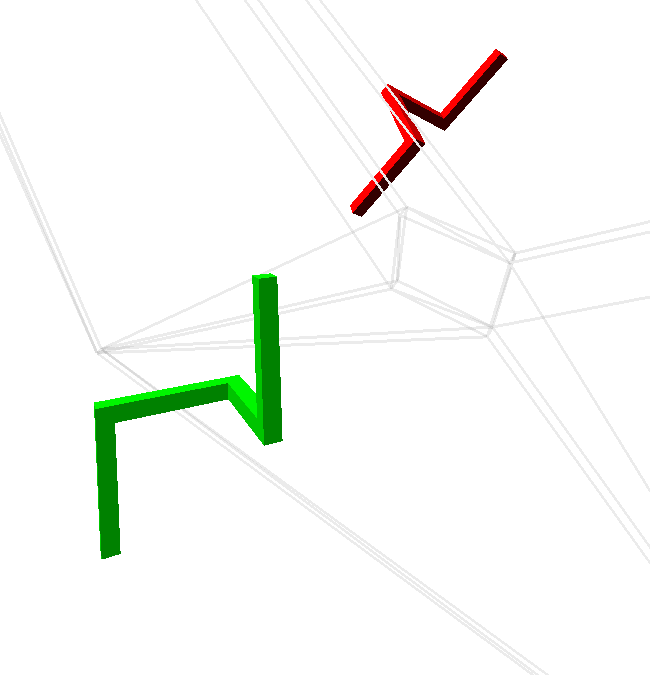}
  \includegraphics[width=\lw,height=\lh]{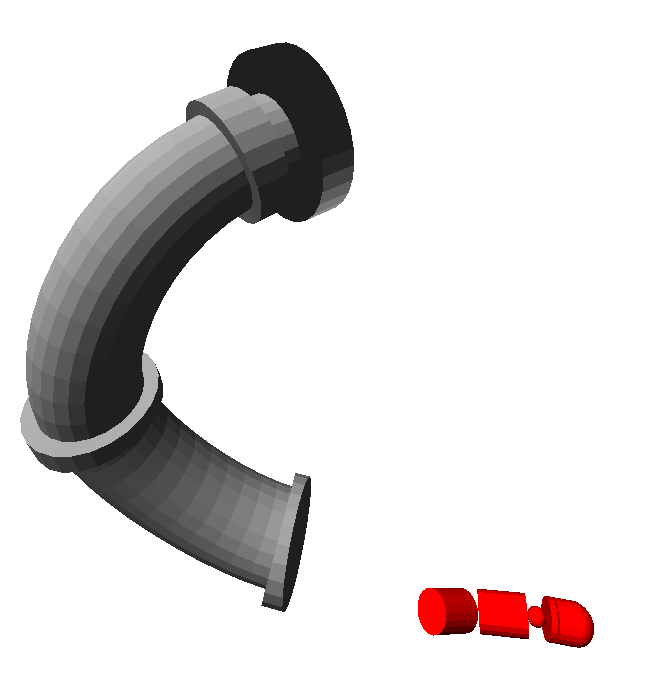}
  \includegraphics[width=\lw,height=\lh]{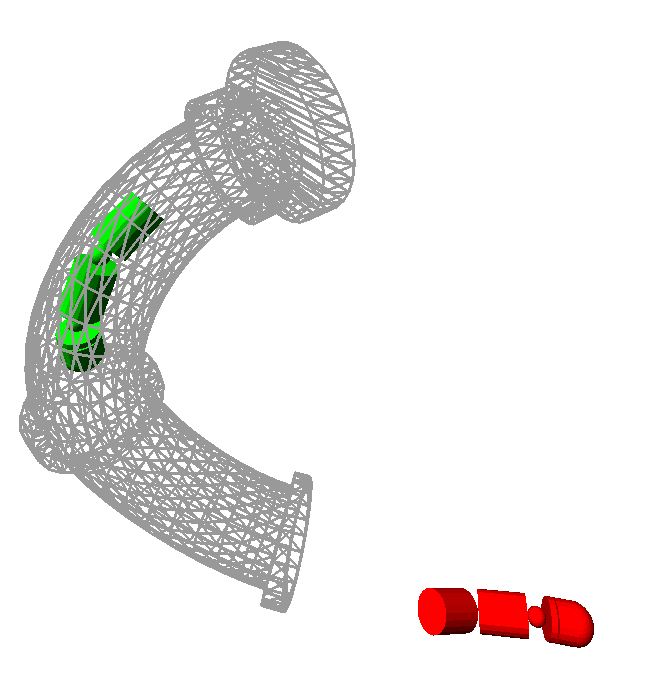}
  \includegraphics[width=\lw,height=\lh]{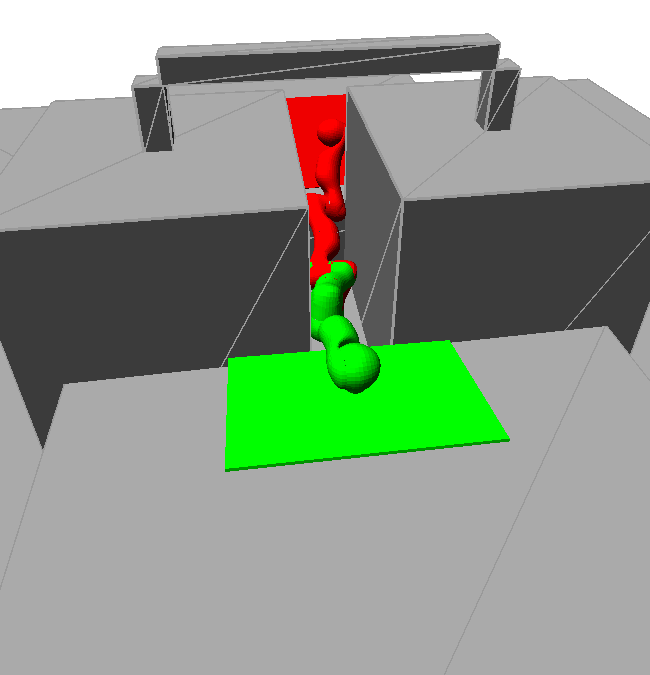}
  \includegraphics[width=\lw,height=\lh]{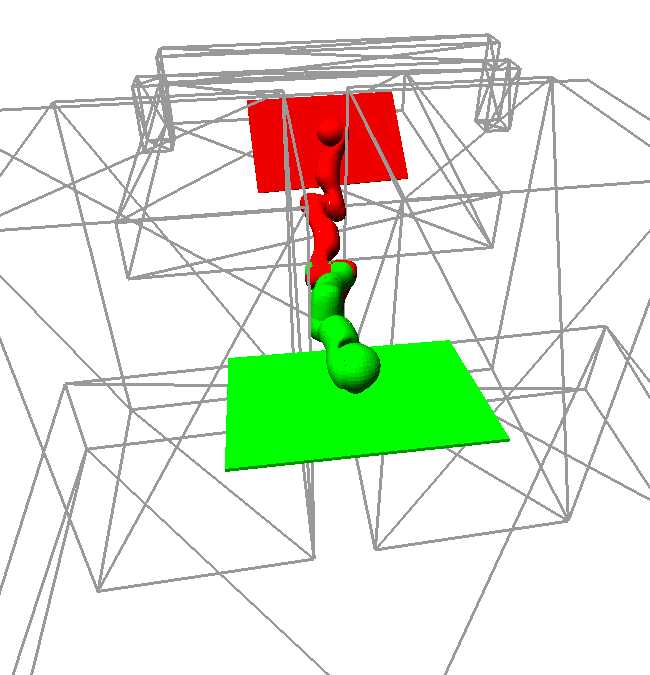}
  \includegraphics[width=\lw,height=\lh]{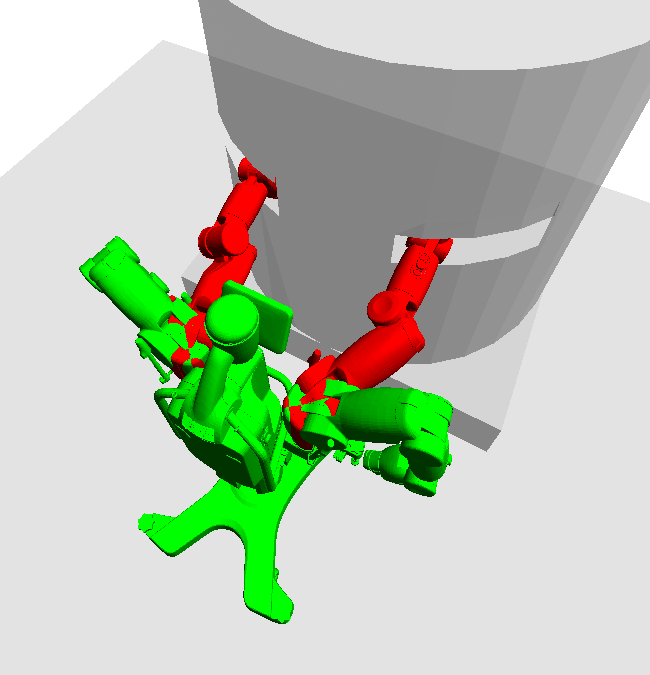}
  \includegraphics[width=\lw,height=\lh]{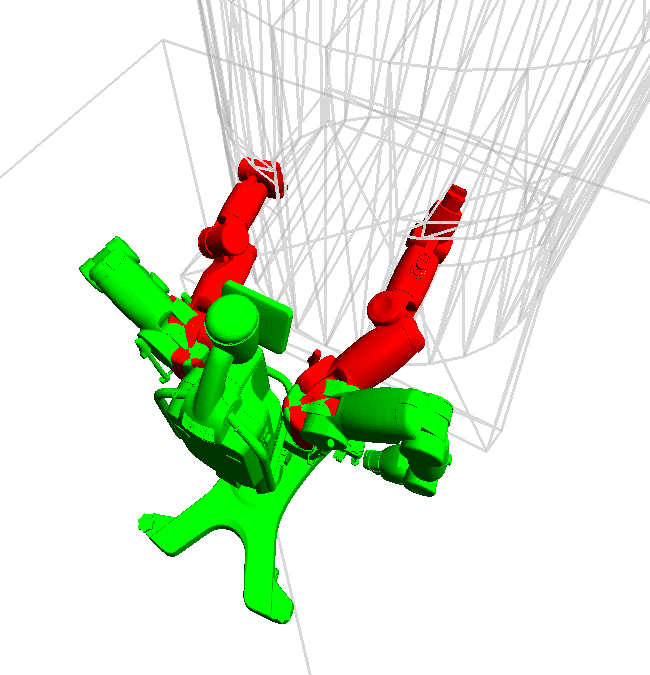}
\vspace{-8pt}
  \caption{The four experiments considered in this paper: For each experiment we show the start configuration of the robot (green) and the goal configuration (red). The left column shows the faces of the environment mesh, while the right column shows the edges of the environment mesh. \label{fig:experiments}}
\end{figure}

\begin{figure}
  \centering
  \includegraphics[width=0.95\linewidth,height=0.9\linewidth]{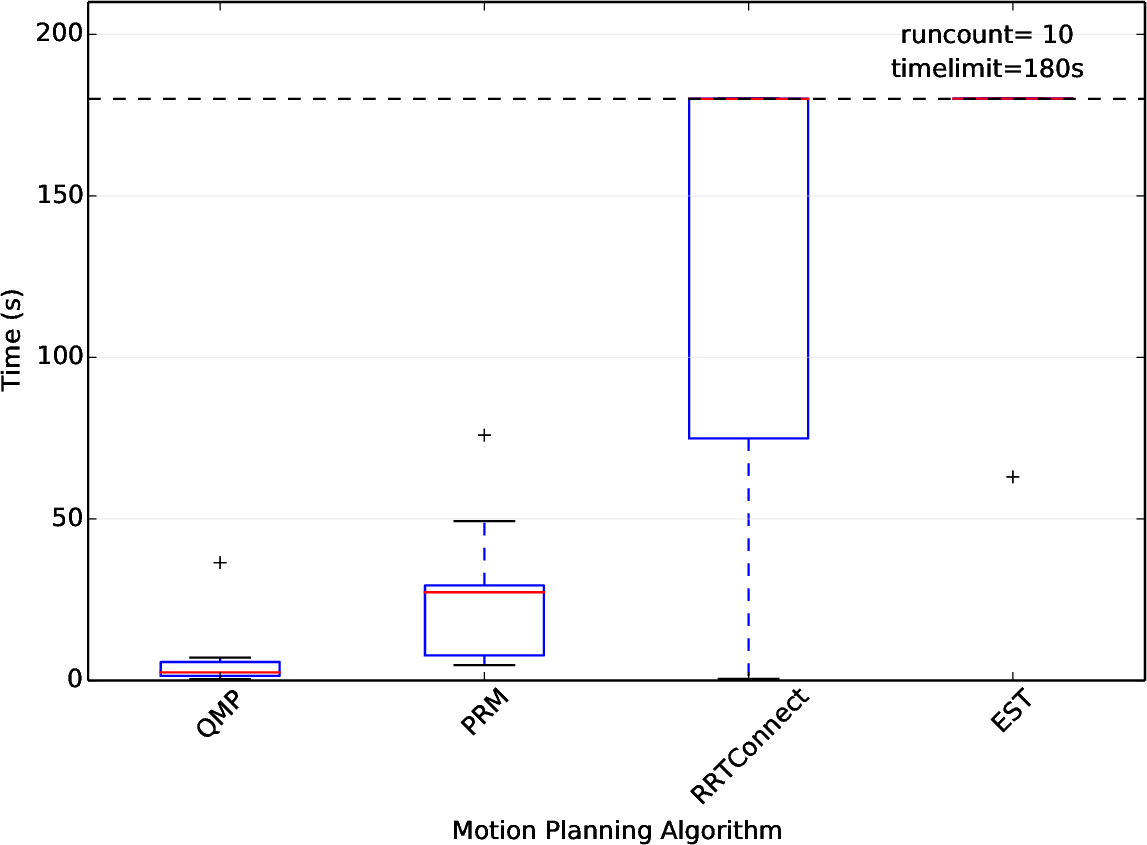}
\vspace{-8pt}
  \caption{Free-floating rigid body: Double L-Shape 6-dof benchmark.\label{benchmark:dls}}
\end{figure}

\begin{figure}
  \centering
  \includegraphics[width=0.95\linewidth,height=0.9\linewidth]{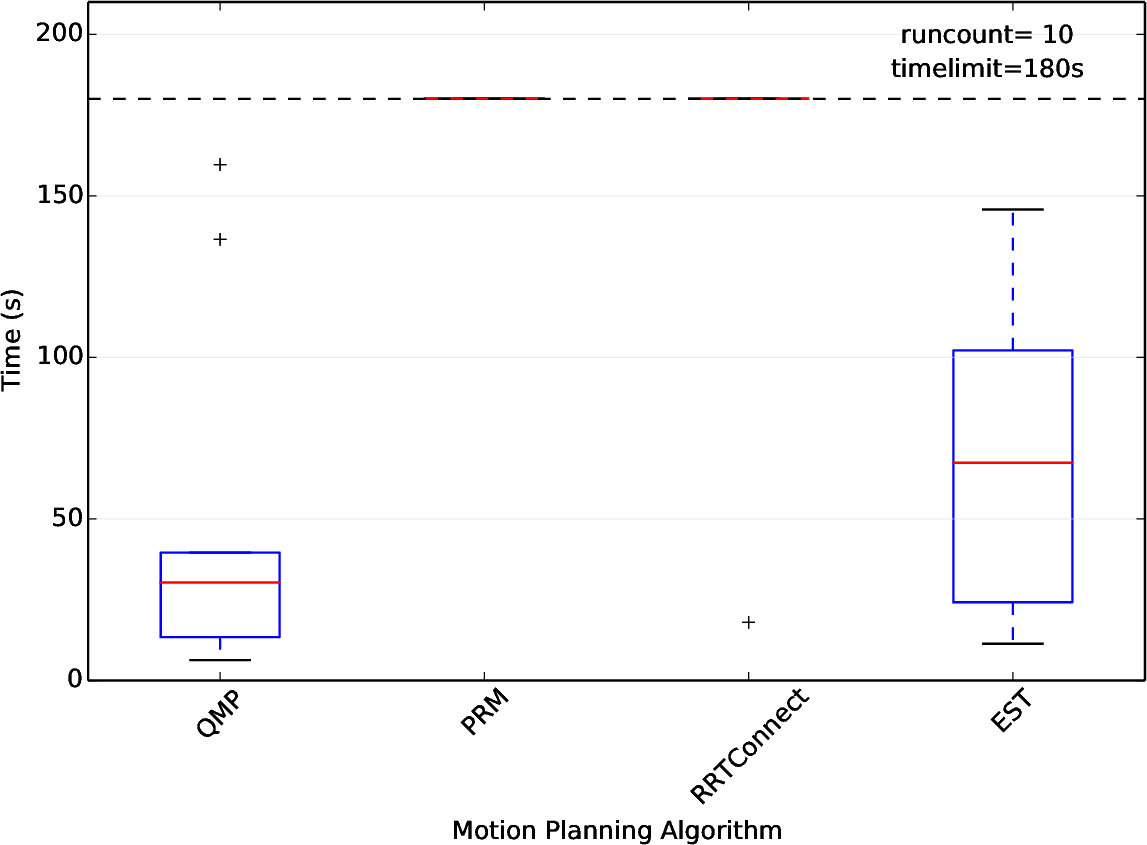}
\vspace{-8pt}
  \caption{Free-floating articulated body: Mechanical Snake 10-dof benchmark.\label{benchmark:snake}}
\end{figure}

\begin{figure}
  \centering
  \includegraphics[width=0.95\linewidth,height=0.9\linewidth]{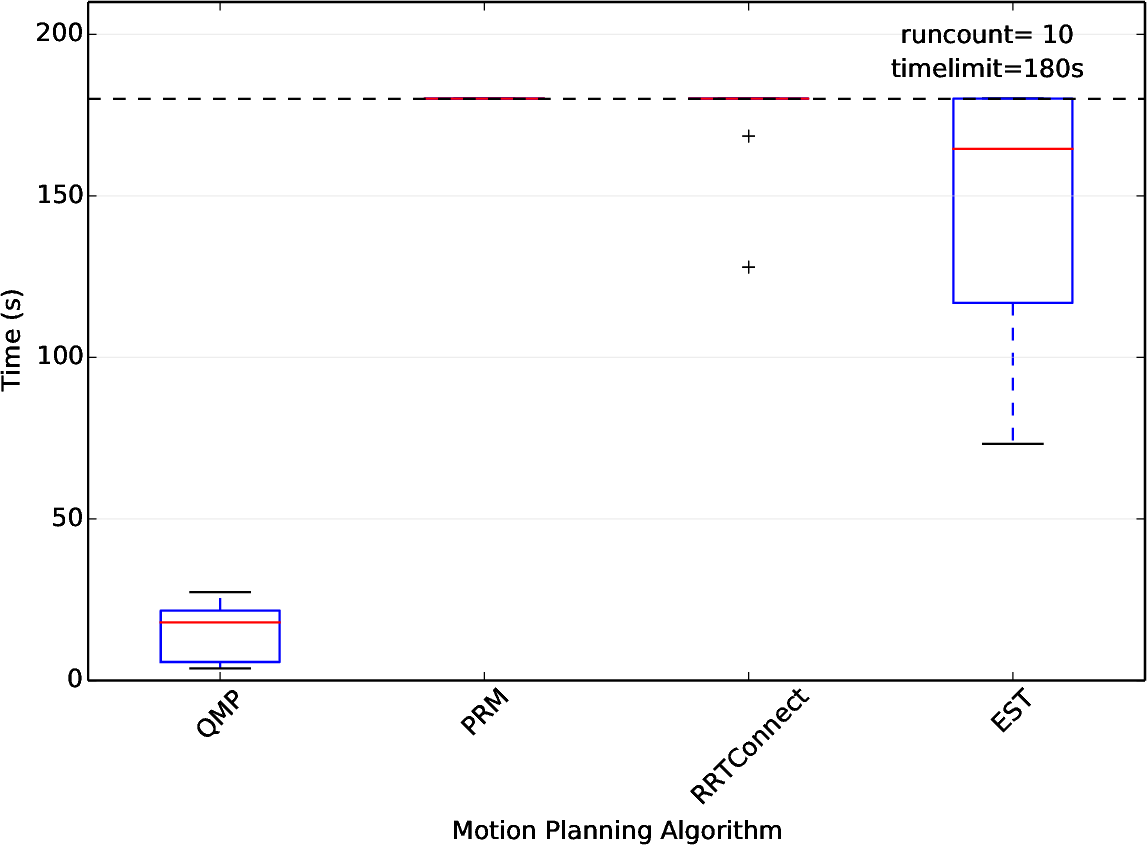}
\vspace{-8pt}
  \caption{Fixed-base serial chain: KUKA LWR 7-dof benchmark.\label{benchmark:kuka}}
\end{figure}

\begin{figure}
  \centering
  \includegraphics[width=0.95\linewidth,height=0.9\linewidth]{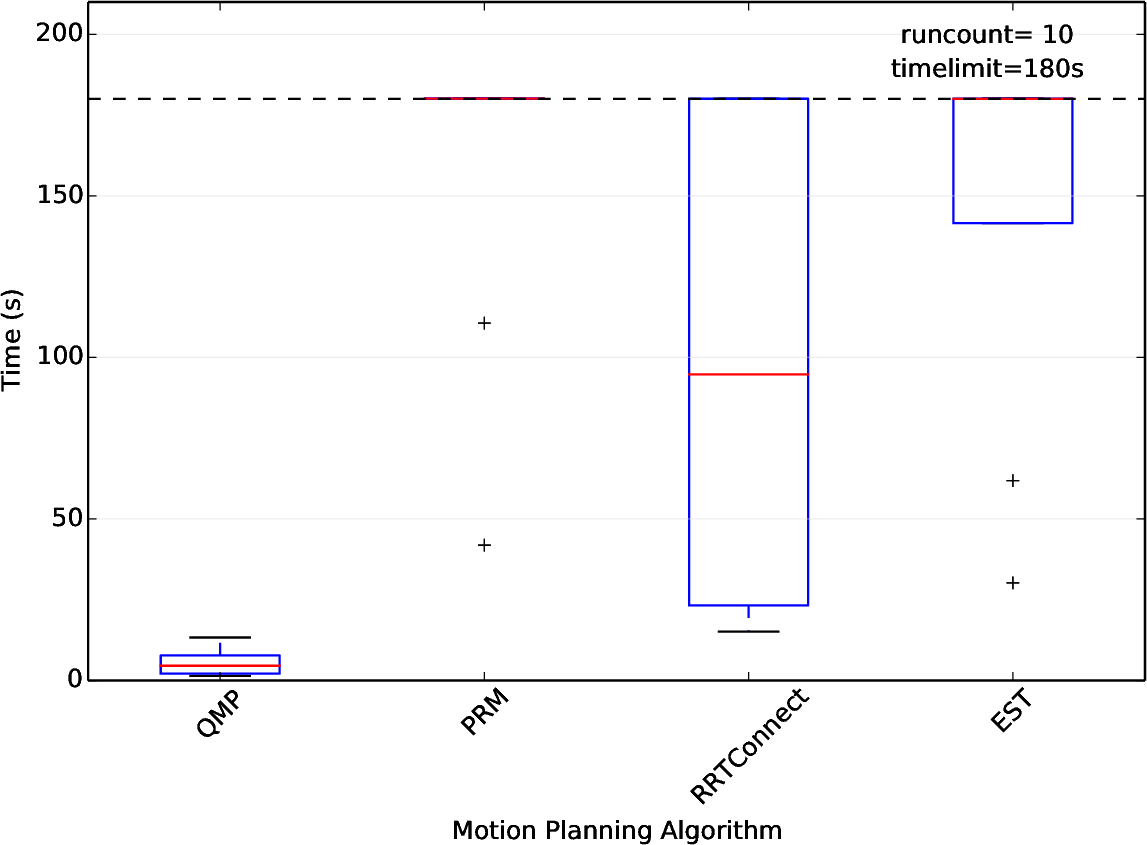}
\vspace{-8pt}
  \caption{Fixed-base tree chain: Baxter 14-dof benchmark.\label{benchmark:baxter}}
\end{figure}
\section{Conclusion}

We have introduced the quotient-space decomposition, a configuration space decomposition based on nested robots. To exploit this decomposition we developed the Quotient-space roadMap Planner (\myplanner). We showed that \myplanner is probabilistically complete and that \myplanner has lower runtime on four environments compared to state-of-the-art motion planning algorithms implemented in the \ompl software.

We like to extend \myplanner in three directions.
First, the quotient-space decomposition of a configuration space is not unique, and has to be specified by a human operator. We like to investigate which decomposition is optimal, and automate its specification. Second, we like to generalize our approach to closed kinematic chains, constraint manifolds, and dynamic constraints.
Finally, we like to apply the quotient-space decomposition to environments changing over time, where a roadmap on the quotient space could improve fast decision making.
% %A hierarchical decomposition is beneficial in several aspects. 
%   First, a
% hierarchical decomposition is an additional structure to guide a motion
% planning algorithm, it does not sacrifice the completeness. 
%   Second, high-level
% homotopy classes can be labelled by a human operator to better understand the
% motion of a robot. This makes a robot more predictable and therefore better
% suited for human-robot interaction. 
%   Third, there is benefit for excuse-making,
% i.e. if no path can be found, excuse-making is the problem of describing why no
% path can be found. High-level abstractions can be used to describe that a
% homotopy class is too small for moving through it. 
%   Fourth, reactive motions can
% easier be designed. Consider that a robot wants to walk through a door. At one
% moment the door is being shut. This is hard to detect inside the configuration
% space, but can be spotted easily with a high-level abstraction. 
%   Fifth, we can
% selectively ignore regions to make sampling more efficient. 
%   Sixth, we can
% insert a human in the loop conducting interactive motion planning by selectively
% choosing the homotopy classes the robot should go to. 
%   Seventh, we can choose
% only the final position of a nested robot, making it easier to give goal
% commands like goto position xyz, or a neighborhood of that. Eigth, our approach
% allows us to enumerate all possible paths. We can now use that for example in a
% hierarchical control framework to choose locally the best paths.

\bibliographystyle{unsrtnat}
{\small
\bibliography{bib/general,bib/hierarchy}
}

\end{document}